\documentclass[letterpaper]{article} 
\usepackage{aaai2027}
\usepackage[hyphens]{url}
\usepackage{graphicx}
\usepackage{natbib}
\usepackage{bibunits}
\usepackage{caption}
\usepackage{algorithm}
\usepackage{algorithmic}
\usepackage{amsmath}
\usepackage{multirow}
\usepackage{amssymb}
\usepackage{amsthm}
\usepackage{makecell}
\nocopyright

\usepackage{newfloat}
\usepackage{listings}
\DeclareCaptionStyle{ruled}{labelfont=normalfont,labelsep=colon,strut=off}
\floatstyle{ruled}
\newfloat{listing}{tb}{lst}{}
\floatname{listing}{Listing}
\usepackage{booktabs}

\newtheorem{lemma}{Lemma}
\newtheorem{proposition}{Proposition}
\theoremstyle{remark}

\defaultbibliographystyle{aaai2027}
\defaultbibliography{ref}

\makeatletter
\let\MergedOriginalMaketitle\maketitle
\let\MergedOriginalAtMaketitle\@maketitle
\let\MergedOriginalTitle\title
\let\MergedOriginalAuthor\author
\let\MergedOriginalDate\date
\let\MergedOriginalThanks\thanks
\let\MergedOriginalAffiliations\affiliations
\@ifundefined{corresponding}{}{\let\MergedOriginalCorresponding\corresponding}
\newcommand{\restoremergedtitlecommands}{%
  \let\maketitle\MergedOriginalMaketitle
  \let\@maketitle\MergedOriginalAtMaketitle
  \let\title\MergedOriginalTitle
  \let\author\MergedOriginalAuthor
  \let\date\MergedOriginalDate
  \let\thanks\MergedOriginalThanks
  \let\affiliations\MergedOriginalAffiliations
  \@ifundefined{MergedOriginalCorresponding}{}{\let\corresponding\MergedOriginalCorresponding}%
}
\makeatother 

\title{Credit the Right Box: Marginal Contribution Assignment for\\ Structured Visual Perception}
\author{
    Xinheng Han\textsuperscript{\rm 1,\rm 2}\thanks{Work done during internship at Amap, Alibaba Group.}, 
    Jianfei Wang\textsuperscript{\rm 2},
    Yu Chen\textsuperscript{\rm 2},
    Xiang Wang\textsuperscript{\rm 2}\thanks{Project Leader}
    Shuai Li\textsuperscript{\rm 2},
    Weixing Li\textsuperscript{\rm 1},
    Feng Pan\textsuperscript{\rm 1}\corresponding
}
\affiliations{
    \textsuperscript{\rm 1}School of Automation, Beijing Institute of Technology\\
    \textsuperscript{\rm 2}Amap, Alibaba Group\\
    \{hanxinheng, panfeng\}@bit.edu.cn
}

\begin{document}

\begin{bibunit}
\maketitle

\begin{abstract}

Multimodal Large Language Models (MLLMs) are increasingly expected to solve structured perception tasks that require visual recognition, language-to-object binding, object cardinality preservation, and precisely localized grounding and segmentation outputs. However, existing group-relative reinforcement learning methods provide only response-level supervision, creating a granularity mismatch for structured multi-object prediction: a single advantage is broadcast to all tokens in a response, without distinguishing individual box contributions. To address this mismatch, we propose \textbf{MCR-GRPO}, a marginal contribution assignment framework that derives box-level credit directly from each sampled response. Specifically, \textbf{Marginal Contribution Reward (MCR)} estimates each predicted box’s contribution through a leave-one-out comparison, measuring how the matched set value changes when the box is removed from the response. After within-response normalization, records that improve the set value receive positive credit, while redundant or harmful ones are suppressed. To make marginal attribution stable and informative, we further introduce a \textbf{Continuous Matched Set Value Evaluator} that integrates permutation-invariant matching, count-aware normalization, and graded localization. MCR-GRPO maps normalized box-level marginal advantages to the token spans that generated each box, preserving GRPO’s response-level comparison while enabling box-aware optimization of structured multi-object grounding. Experiments across REC, DOD, segmentation, and counting benchmarks show state-of-the-art performance over prior GRPO-based baselines.

\end{abstract}

\section{Introduction}

Multimodal Large Language Models (MLLMs)~\citep{llava, qwen25vl} have made rapid progress in general visual understanding, but many perception problems require more than a fluent textual answer. In grounding, segmentation, and counting tasks, a model cannot merely judge whether the queried objects are present, but must organize its prediction into a set of structured object records, each of which identifies a distinct target instance, preserves its role in the required cardinality, and associates it with a precise spatial output such as a point, box, or mask prompt. This requirement motivates structured visual perception, where MLLMs are expected to produce object-level visual records that jointly align language, instance identity, cardinality, and spatial localization.

Recent work has explored Group Relative Policy Optimization (GRPO)~\citep{grpo}, a reinforcement learning method, to enhance the structured visual perception ability of MLLMs~\citep{univgr1, vlmr1}. Representative methods such as VisionReasoner~\citep{visionreasoner} adopt GRPO to train models to generate structured point-and-box outputs, and further combine these outputs with SAM2~\citep{sam2} to unify detection~\citep{perceptionr1}, segmentation~\citep{samr1}, and counting~\citep{pixmocount} within a single perception framework.
 
However, GRPO optimizes sampled responses with a single response-level advantage, which is effective for selecting better answers at the response level but is too coarse for structured perception. A structured perception response is not a monolithic decision, since one response may contain a correctly localized object, a duplicated prediction, a missing instance, and a near-miss box. Broadcasting one response-level advantage to all generated tokens therefore creates a mismatch between the reward signal and the object record structure of the actual prediction, because the model is not told which individual box or object record improves the structured output and which one degrades it. As a result, correct object spans and harmful object spans can be reinforced together.

\begin{figure*}[t]
\centering
\includegraphics[width=0.8\textwidth]{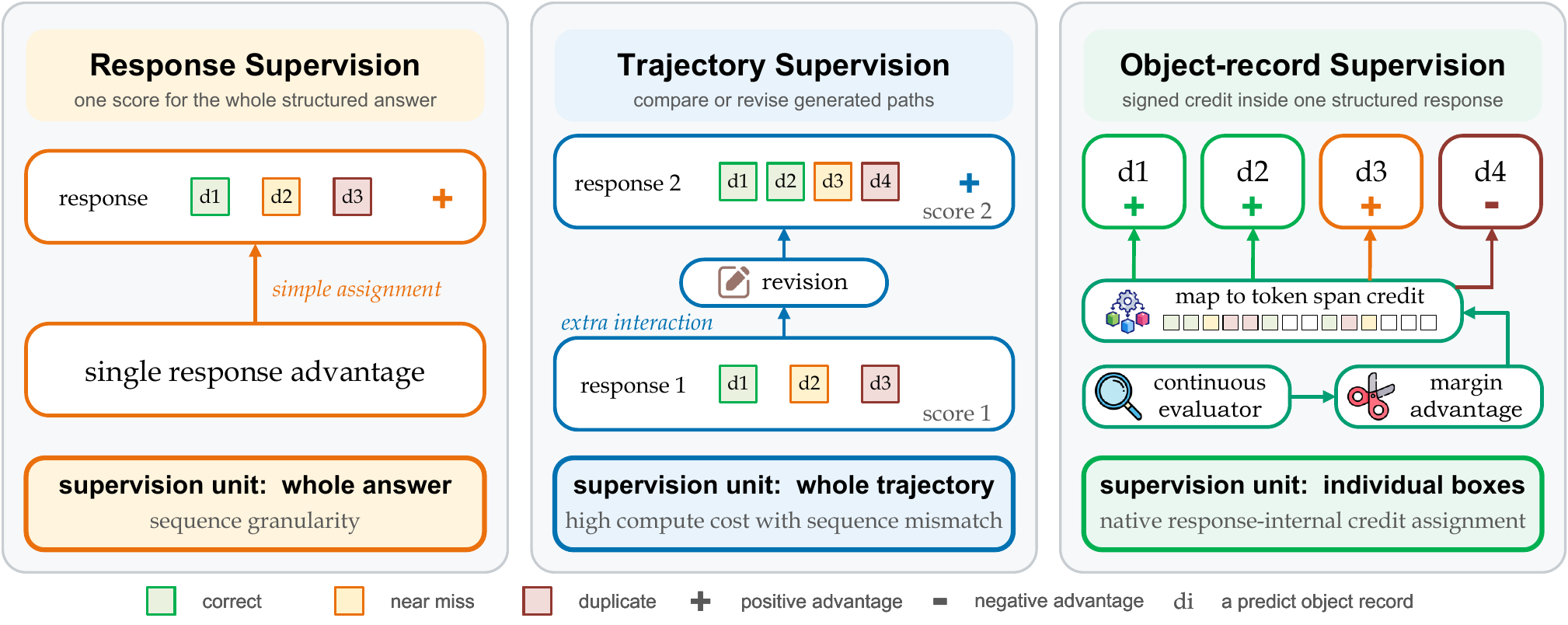}
\caption{\textbf{Three Paradigms of Supervision for Structured Visual Perception.} Response supervision scores the whole structured answer with a single advantage. Trajectory supervision revises entire generated paths, recovering feedback at the cost of extra rollouts, and still updates at sequence granularity. Object-record Supervision (MCR-GRPO) supervises at the object-record level, providing native response-internal credit assignment without auxiliary trajectories or extra rollouts.} \label{fig_compete}
\end{figure*}

To resolve this mismatch, the core question is whether object-local credit can be derived directly from the original structured response, without introducing additional rollouts, auxiliary trajectories, or a separate box-level objective. We answer this question with \textbf{MCR-GRPO}, a GRPO framework that assigns credit to individual boxes by reusing the matched set value of each response. Its core component, \textbf{Marginal Contribution Reward (MCR)}, estimates each prediction's contribution through a leave-one-out comparison. Given a structured response, MCR treats it as a set of object records, evaluates the matched set value of the whole set, removes one predicted record at a time, and measures how the matched set value changes. After within-response normalization, this produces signed, box-specific credit, where beneficial boxes receive positive credit, while redundant, distracting, or harmful boxes receive negative credit.

To make these marginal contributions smoother and more informative, we further introduce a \textbf{Continuous Matched Set Value Evaluator} that respects the unordered and count-sensitive nature of structured multi-object prediction. It computes pairwise scores between predicted and ground-truth objects, solves a Hungarian matching, and normalizes the matched score by the larger set size. It also incorporates continuous IoU scores rather than binarized localization labels, reducing the instability caused by borderline boxes near a hard threshold. The resulting value provides a permutation-invariant, count-aware, and localization-sensitive substrate for measuring marginal contribution. These marginal credits are normalized within each response and mapped to the token spans that generated the corresponding boxes. In this way, MCR-GRPO preserves GRPO's response-level comparison while enabling box-aware optimization for unified structured multi-object grounding.

The main contributions are summarized as follows:

\begin{itemize}

\item{\textbf{We propose MCR-GRPO, a GRPO framework for box-level credit assignment,} which derives response-internal credit without auxiliary trajectories while preserving response-level comparison.}

\item{\textbf{We introduce Marginal Contribution Reward (MCR) to estimate signed box-specific contributions} by measuring the leave-one-out change in set value, capturing duplication and substitution effects that independent pairwise scoring cannot express.}

\item{\textbf{We design a Continuous Matched Set Value Evaluator,} providing a smooth substrate that makes leave-one-out marginal attribution stable and informative.}

\item{\textbf{Experiments on REC, DOD, segmentation, and counting} demonstrate that MCR-GRPO achieves state-of-the-art performance over prior methods in unified structured visual perception, while preserving general VQA ability.}
    
\end{itemize}

\begin{figure*}[t]
\centering
\includegraphics[width=\textwidth]{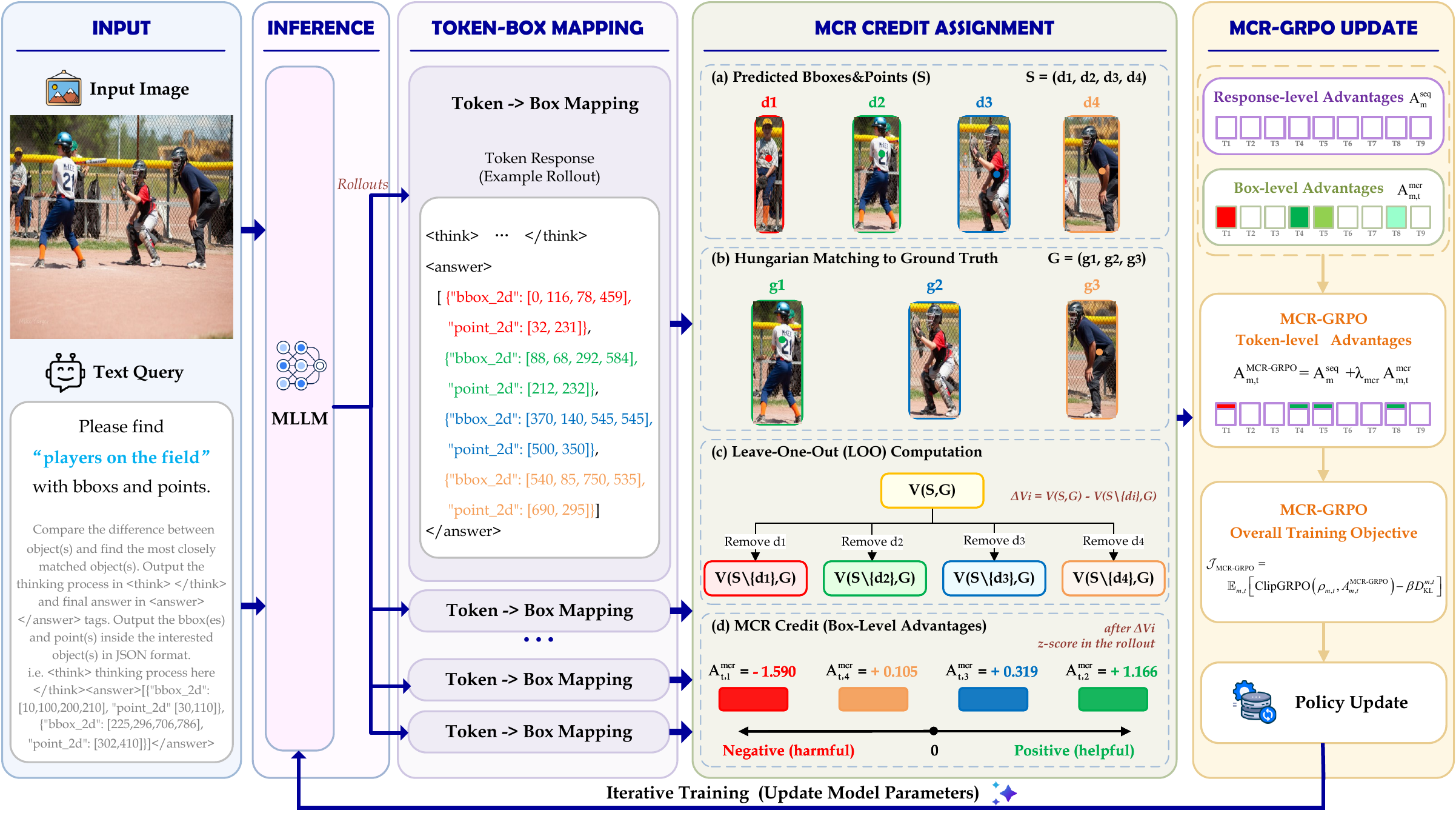}
\caption{\textbf{Overview of MCR-GRPO.} Given an image-query pair, the policy samples structured responses and parses each response into box-indexed object records with their generated token spans. MCR-GRPO matches the predicted records to ground-truth objects, computes the matched set value, and estimates each object-record's leave-one-out contribution to box-level MCR advantages. The advantages are mapped back to the token spans and combined with the response-level GRPO advantage, reinforcing helpful records while suppressing harmful or redundant ones.} \label{fig_main}
\end{figure*}

\section{Related Work}

\paragraph{Structured Visual Perception with MLLMs.}
Structured visual perception has progressed from language-conditioned detection, grounding, and segmentation to MLLMs that directly express boxes, regions, points, or masks in a language interface~\citep{glip,groundingdino,kosmos2,groma}. Pixel-level and promptable MLLMs further connect language reasoning to dense masks through special tokens, visual prompts, and segmentation decoders~\citep{reasonseg,segllm,read,sam,sam2}. Recent unified models extend this interface across grounding, segmentation, OCR, counting, and VQA-style perception by sharing backbones, output formats, or task mixtures~\citep{florence2,qwen25vl,llavaov}. The remaining bottleneck is the structure of the prediction itself: a response is an unordered, count-sensitive object set where correct instances, duplicates, omissions, and near-threshold boxes may coexist~\citep{grefcoco,d3}. Thus, beyond enabling MLLMs to emit coordinates, structured visual perception requires training signals that can identify which object records preserve instance identity, cardinality, and localization quality.

\paragraph{Learning Structured Perception.}
Supervised fine-tuning (SFT) teaches output formats and visual tool interfaces, but strong structured-perception systems still rely on large curated grounding, mask, and instruction mixtures~\citep{llava,pixellm,groma}. Reinforcement fine-tuning (RFT) reduces this dependence by replacing dense annotations with verifiable rewards, and recent visual RL methods use IoU, point, mask, format, or external-segmenter feedback for grounding, detection, segmentation, OCR, and counting~\citep{visualrft,vlmr1,univgr1,perceptionr1,segzero,samr1,segr1}. VisionReasoner advances toward unified visual perception, but it still follows GRPO's response-level supervision: one scalar reward and one relative advantage update the whole sampled response, even when the reward is computed by multi-object matching~\citep{visionreasoner}. This granularity mismatch can reinforce both correct and harmful boxes within the same structured answer.

Trajectory feedback provides another route by scoring intermediate steps, revised candidates, or generated paths~\citep{lets,mathshepherd}. Group Revision adapts this idea to object grounding by revising an initial response and using revision gains for reward shaping and advantage scaling~\citep{grouprevision}. It recovers supervision on hard cases, but its feedback unit is still a revised trajectory rather than an individual box, and the extra revision prompts and rollouts increase training cost. MCR-GRPO instead keeps the original GRPO responses and derives response-internal box-level credit from each predicted box's leave-one-out marginal contribution to the matched set value, without auxiliary trajectories, extra rollouts, or a separate box-level objective.

\section{Method}

\subsection{Problem Formulation}

Given an image $x\in\mathcal{X}$ and a language query $q\in\mathcal{Q}$, the policy model $\pi_\theta$ is optimized to generate a structured visual perception response. The ground truth is an unordered object set $G=\{g_j\}_{j=1}^{N}$, where $N$ is the number of target objects. Each ground-truth object $g_j$ contains spatial supervision, such as a bounding box $b_j^\star\in\mathbb{R}^{4}$ and a mask $m_j^\star$.

During RL training, multiple responses are sampled for the same image-query pair. Following GRPO, the old policy $\pi_{\theta_{\mathrm{old}}}$ samples a group of responses $\{y_m\}_{m=1}^{M}\sim \pi_{\theta_{\mathrm{old}}}(\cdot\mid x,q)$, where $M$ is the group size. For each sampled response, we omit the response index when discussing its internal predicted set. A deterministic parser extracts the predicted object set $S=\{d_i\}_{i=1}^{K}$, where $K$ is the number of predicted object records. Each predicted record $d_i$ contains a predicted bounding box $\hat b_i\in\mathbb{R}^{4}$, a predicted point $\hat p_i\in\mathbb{R}^{2}$, and the generated token span $\tau_i$ that produces this record.

A conventional response-level reward $R^{\mathrm{seq}}_m$ is assigned to each sampled response $y_m$, and the group-normalized advantage is computed as:
\begin{equation}
A^{\mathrm{seq}}_m = \frac{R^{\mathrm{seq}}_m-\mu_{x,q}}{\sigma_{x,q}+\epsilon},
\end{equation}
where $\mu_{x,q}$ and $\sigma_{x,q}$ are the mean and standard deviation of the response rewards within the sampled group. This advantage compares different responses, but it is applied uniformly to all valid tokens in the same response. Such response-level optimization cannot distinguish the object records inside one structured answer, even though different records may be correct, duplicated, mislocalized, or harmful. Therefore, our goal is to augment response-level GRPO with response-internal credit assignment over the predicted object set $S$.

\subsection{Continuous Matched Set Value Evaluator}

To measure the quality of a structured prediction, we introduce a continuous matched set value evaluator. Given a predicted object set $S=\{d_i\}_{i=1}^{K}$ and the ground-truth set $G=\{g_j\}_{j=1}^{N}$, the evaluator produces a scalar value $V(S,G)$ that reflects set-level grounding quality.

For each predicted record $d_i$ and ground-truth object $g_j$, we compute a continuous pair score:
\begin{equation}
\begin{aligned}
s(d_i,g_j) = \alpha s_{\mathrm{iou}}(\hat b_i,b_j^\star) &+ \beta s_{\mathrm{box}}(\hat b_i,b_j^\star) \\
& + \gamma s_{\mathrm{point}}(\hat p_i,m_j^\star).
\end{aligned}
\end{equation}
where $s_{\mathrm{iou}}$ measures box overlap, $s_{\mathrm{box}}$ measures coordinate-level localization closeness, and $s_{\mathrm{point}}$ measures point quality. The weights $\alpha,\beta,\gamma$ control the relative importance of these spatial cues.

Unlike prior binary reward designs that threshold localization correctness, our evaluator keeps the spatial quality continuous whenever possible. Specifically, the IoU term is directly defined as: $s_{\mathrm{iou}}(\hat b_i,b_j^\star)=\operatorname{IoU}(\hat b_i,b_j^\star)$.

For coordinate-level box quality, we first compute the mean absolute coordinate distance in pixel space: $\ell^{\mathrm{box}}_{ij}=\frac{1}{4}\left\|\hat b_i-b_j^\star\right\|_1$, and convert it into a truncated exponential score:
\begin{equation}
s_{\mathrm{box}}(\hat b_i,b_j^\star)
=
\begin{cases}
\exp(-\ell^{\mathrm{box}}_{ij}/10), & \ell^{\mathrm{box}}_{ij}\le 40,\\
0, & \text{otherwise}.
\end{cases}
\end{equation}

For point quality, we do not use the distance to a reference point as the main criterion. In the downstream SAM-based segmentation setting, a point is useful when it lies on the target object, while a point closer to a reference point is not necessarily better. Therefore, we define the point score using the ground-truth object mask $m_j^\star$ and bounding box $b_j^\star$:
\begin{equation}
s_{\mathrm{point}}(\hat p_i,m_j^\star)=
\begin{cases}
1.0, & \hat p_i \in b_j^\star \text{ and } \hat p_i \in m_j^\star,\\
0.3, & \hat p_i \in b_j^\star \text{ and } \hat p_i \notin m_j^\star,\\
0, & \text{otherwise}.
\end{cases}
\end{equation}

Among the three terms, IoU provides the primary estimate of region-level set quality, while the box-distance and point-validity terms provide auxiliary localization and promptability cues. We therefore set $\alpha=2$, $\beta=\gamma=1$ in our evaluator.

Since both $S$ and $G$ are unordered sets, we use Hungarian matching to compute an optimal one-to-one assignment between predicted records and ground-truth objects:
\begin{equation}
M^\star = \arg\max_{M'\in\mathcal{M}(S,G)} \sum_{(i,j)\in M'}s(d_i,g_j),
\end{equation}
where $\mathcal{M}(S,G)$ denotes the set of valid one-to-one matchings. In implementation, this maximum-weight bipartite matching is solved by the Hungarian algorithm. 

The continuous matched set value is defined as:
\begin{equation}
V(S,G) = \frac{1}{\max(K,N)} \sum_{(i,j)\in M^\star}s(d_i,g_j).
\end{equation}

This evaluator is permutation-invariant since it depends on Hungarian matching rather than object order. It is count-aware because the denominator penalizes extra predictions and missing targets and is continuous because near-correct boxes can receive partial credit instead of being collapsed into a binary failure. This value $V(S,G)$ provides the set-level basis for computing marginal contribution rewards.

\subsection{Marginal Contribution Reward}

Given the continuous matched set value $V(S,G)$, we estimate response-internal object credit through leave-one-out comparison. The core idea is to measure how the set-level value changes when one predicted object record is removed from the structured prediction. For each predicted record $d_i\in S$, we compute the leave-one-out value difference as:
\begin{equation}
\Delta V_i
=
V(S,G)-V(S\setminus\{d_i\},G).
\end{equation}

Here, $\Delta V_i$ is the raw marginal contribution of $d_i$ before normalization. When $\Delta V_i\gg 0$, removing $d_i$ substantially decreases the set value, so $d_i$ is a helpful record that contributes to the structured prediction. When $\Delta V_i\ll 0$, removing $d_i$ improves the set value, so $d_i$ is harmful, often due to duplication, false positives, count inflation, or misleading localization. This can occur only when the response over-predicts ($K>N)$, whereas for$K \le N$ every raw contribution is non-negative and poorly localized records are instead suppressed in relative terms after the normalization. When $\Delta V_i\approx 0$, $d_i$ has little marginal effect under the current evaluator, either because it is weakly matched or because its contribution is redundant with other records.

\begin{table*}[t]
    \centering
    \small
    \begin{tabular}{l|cccccccccc|c|ccc}  
        \toprule
        \multirow{3}{*}{Method} & \multicolumn{10}{c|}{Referring Expression Comprehension (Acc@0.5)} & \multirow{3}{*}{Avg.} & \multicolumn{3}{c}{DOD (AP)}\\

        & \multicolumn{2}{c}{ReasonG} & \multicolumn{3}{c}{RefCOCO} & \multicolumn{3}{c}{RefCOCO+} & \multicolumn{2}{c|}{RefCOCOg} &  & \multicolumn{3}{c}{$\mathrm{D^3}$}\\
                           &      Val      &     Test      &      Val      &     TestA     &     TestB     &      Val      &     TestA     &     TestB     &      Val      &     Test      &               &     Full      &     Pres.     &      Abs.     \\
        \midrule
        \multicolumn{15}{l}{\textit{Supervised Fine-Tuning Based Methods}}\\
        PerceptionGPT-7B   &       -       &       -       &      88.6     & \textbf{92.5} &      84.6     &      82.1     &      88.6     &      74.2     &      84.1     &      85.2     &       -       &       -       &       -       &       -       \\
        VistaLLM-7B        &       -       &       -       &      88.1     &      91.5     &      83.0     &      82.9     & \textbf{89.8} &      74.8     &      83.6     &      84.4     &       -       &       -       &       -       &       -       \\
        Elysium-7B         &       -       &       -       &      89.1     &      92.1     &      85.0     &      82.9     &      88.9     &      75.6     &      82.9     &      83.6     &       -       &       -       &       -       &       -       \\
        Groma-7B           &       -       &       -       &      89.5     &      92.1     & \textbf{86.3} &      83.9     &      88.9     &      78.1     &      86.3     &      87.0     &       -       &      16.0     &      15.9     &      16.3     \\
        \midrule
        \multicolumn{15}{l}{\textit{Open-Source MLLM and Reinforcement Fine-Tuning Based Methods}}\\
        Qwen2.5-VL-7B      &      68.9     &      59.8     &      88.8     &      91.7     &      81.4     &      82.3     &      88.2     &      69.2     &      84.7     &      85.7     &      80.1     &      19.6     &      19.4     &      20.3     \\
        SegZero-7B         &      69.3     &      64.6     &      89.3     &      91.5     &      81.9     &      82.0     &      87.6     &      74.7     &      86.1     &      86.3     &      81.3     &       -       &       -       &       -       \\
        VisionReasoner-7B  &      80.1     &      78.5     &      88.6     &      90.6     &      84.7     &      83.6     &      87.9     &      80.2     &      86.1     &      87.5     &      84.8     &      22.0     &      21.2     &      24.1     \\
        GroupRevision-7B   &      83.7     &      81.2     &      89.5     &      91.6     &      85.1     &      84.4     &      88.3     &      77.6     &      88.4     &      88.3     &      85.8     &       -       &       -       &       -       \\
        \midrule
        MCR-GRPO (ours)    & \textbf{84.0} & \textbf{83.3} & \textbf{90.2} &      91.6     &      86.2     & \textbf{85.2} &      87.8     & \textbf{80.6} & \textbf{89.1} & \textbf{89.9} & \textbf{86.8} & \textbf{23.1} & \textbf{22.5} & \textbf{24.8}\\
        \bottomrule
    \end{tabular}
    \caption{\textbf{Comparison with SOTA Methods on REC and DOD Tasks.} The best results are highlighted in \textbf{bold}.}
    \label{tab_RECDOD}
\end{table*}

This leave-one-out formulation captures interactions that independent pairwise scoring cannot. A duplicated prediction may obtain a reasonable local matching score, but it can still reduce the set value by increasing the predicted count. Conversely, a near-correct prediction can receive positive marginal credit if it improves the optimal matching. Thus, $\Delta V_i$ attributes each record according to its effect on the whole predicted set $S$, rather than its isolated similarity to a single ground-truth object. A single Hungarian matching costs \(O(\max(K,N)^3)\). Since MCR computes one full-set value and \(K\) leave-one-out values, the per-response matching cost is \(O((K+1)\max(K,N)^3)\), which is small compared with VLM rollout generation and policy optimization.

The raw sign of $\Delta V_i$ is regime-dependent. Under over-prediction $(K > N)$, an unmatched record provably receives strictly negative raw credit, so duplicate suppression follows from the count-aware normalization rather than a hand-designed penalty. Under $K \leq N$ every raw contribution is non-negative, so \textit{harmful} must be defined relative to a reference point. Absolute set quality is already carried by $V(S,G)$ through the response-level reward, and centering makes MCR zero-sum and hence orthogonal to that signal. The z-score is strictly increasing, so it preserves the ranking by marginal contribution, and unmatched or perfectly substitutable records provably occupy the lowest credits. Finally, $\left|\bar{A}_i^{\mathrm{mcr}}\right| \leq \sqrt{K-1}$, so the composite advantage keeps the sign of $A^{\mathrm{seq}}$ whenever $\lambda_{\mathrm{mcr}}\sqrt{K-1} < \left|A^{\mathrm{seq}}\right|$. MCR therefore guarantees a strict within-response ordering rather than an absolute sign per box, which is analyzed in \textbf{Appendix}.

We then normalize the raw value differences within the same rollout. This normalization is computed only over the predicted object records in $S$; tokens outside the structured object records, such as reasoning tokens or other non-structural text, do not participate in this normalization. For $K>1$, we compute:
\begin{equation}
\mu_{\Delta}=\frac{1}{K}\sum_{i=1}^{K}\Delta V_i, \quad \sigma_{\Delta}=\sqrt{\frac{1}{K}\sum_{i=1}^{K}(\Delta V_i-\mu_{\Delta})^2},
\end{equation}
and obtain the normalized object-level MCR credit as:
\begin{equation}
\bar A_i^{\mathrm{mcr}}=\frac{\Delta V_i-\mu_{\Delta}}{\sigma_{\Delta}+\epsilon}.
\end{equation}

The normalization is rollout-local, so MCR compares object records inside the same structured response rather than across different sampled responses.

For the special case where a rollout contains only one predicted object record, there is no response-internal comparison to perform. We therefore set its normalized MCR credit to zero: $\bar A_1^{\mathrm{mcr}}=0$, if $K=1$. With this convention, the normalized object-level MCR credits within each rollout have zero algebraic sum $\sum_{i=1}^{K}\bar A_i^{\mathrm{mcr}}=0$. MCR is thus a zero-sum redistribution at the object-record level. It injects no net preference for or against the whole response, but reallocates credit among the records inside it, while the ranking across rollouts remains determined by the response advantage $A^{seq}$. After mapping to token spans, exact cancellation holds at the record level rather than the token level, since spans differ in length; in practice structured records have near-uniform span lengths.

Finally, the normalized MCR credit is delivered to the tokens that generate the corresponding object record. Let $\tau_i$ denote the generated token span of $d_i$. We define the token-level MCR term for record $d_i$ as:
\begin{equation}
A_{t,i}^{\mathrm{mcr}}
=
\begin{cases}
\bar A_i^{\mathrm{mcr}}, & t\in\tau_i,\\
0, & t\notin\tau_i.
\end{cases}
\end{equation}

Thus, only the tokens belonging to the structured span of $d_i$ receive its MCR credit. Tokens that are not assigned to any predicted object record have zero MCR term and are affected only by the response-level GRPO advantage in the final policy update.

\subsection{MCR-GRPO Optimization}

MCR-GRPO combines the response-level GRPO signal with the token-span MCR residual. For each sampled response $y_m$, we first compute its response-level reward. Let $F_m\in\{0,1\}$ denote whether $y_m$ satisfies all required structural rules, including the reasoning tags, answer tags, valid JSON format, and valid $bbox\_2d$ and $point\_2d$ fields. These checks follow the format-reward design in VisionReasoner, but we use an all-pass rule: if any condition fails, $F_m=0$.

The response-level reward is:
\begin{equation}
R_m^{\mathrm{seq}} = F_m \left(4+1.5\,R_m^{\mathrm{nr}}+ V(S,G)\right),
\end{equation}
where $R_m^{\mathrm{nr}}\in\{0,1\}$ is the non-repetition reward, and $V(S,G)$ is the accuracy reward computed by the continuous matched set value evaluator. Since $\alpha=2,\beta=\gamma=1$, the maximum value of $V(S,G)$ is $4$. The constant 4 provides a fixed base reward for format-valid responses so that, after group normalization, any response failing the structural checks receives a strongly negative advantage. Following VisionReasoner, the non-repetition reward $R_m^{\mathrm{nr}}$ is set to 1 unless the response contains repeated predictions.

We then obtain the response-level advantage $A_m^{\mathrm{seq}}$ by applying the group z-score normalization in Eq.~(1) to $R_m^{\mathrm{seq}}$. The final token-level advantage is:
\begin{equation}
A_{m,t}^{\mathrm{MCR\text{-}GRPO}} = A_m^{\mathrm{seq}} + \lambda_{\mathrm{mcr}}A_{t,i}^{\mathrm{mcr}}, \quad t\in\tau_i,
\end{equation}
where $A_m^{\mathrm{seq}}$ preserves GRPO's response-level comparison, while $A_{t,i}^{\mathrm{mcr}}$ adds localized residual credit only to the token spans that generate predicted object records.

We optimize the policy with a clipped GRPO objective. The token-level likelihood ratio is:
\begin{equation}
\rho_{m,t}(\theta)=\frac{\pi_\theta(y_{m,t}\mid x,q,y_{m,<t})}{\pi_{\theta_{\mathrm{old}}}(y_{m,t}\mid x,q,y_{m,<t})}.
\end{equation}

\begin{table*}[t]
    \centering
    \small
    \begin{tabular}{l|ccccccc|c|ccc|c}
        \toprule
        \multirow{3}{*}{Method} & \multicolumn{7}{c|}{Segmentation (gIoU \& cIoU)} & \multirow{3}{*}{Avg.} & \multicolumn{3}{c|}{Counting (Acc)} & \multirow{3}{*}{Avg.} \\
        
        & \multicolumn{2}{c}{ReasonSeg} & \multicolumn{2}{c}{RefCOCO} & \multicolumn{2}{c}{RefCOCO+} & RefCOCOg &  & \multicolumn{2}{c}{Pixmo} & Count &  \\
        
                           & Val   & Test  & TestA & TestB & TestA & TestB & Test  &       &  Val  & Test  & Test  &       \\
        \midrule
        \multicolumn{13}{l}{\textit{Supervised Fine-Tuning Based Methods}}\\
        LLaVA-OV-7B        &       -       &       -       &      58.1     &       -       &      47.1     &       -       &      55.6     &       -       &       -       &       -       &       -       &       -       \\
        LISA-7B            &      44.4     &      36.8     &      79.1     &      72.3     &      70.8     &      58.1     &      70.6     &       -       &       -       &       -       &       -       &       -       \\
        PixelLM-7B         &       -       &       -       &      78.6     &      68.2     &      71.7     &      58.3     &      70.5     &       -       &       -       &       -       &       -       &       -       \\
        PerceptionGPT-7B   &       -       &       -       &      78.6     &      71.7     &      73.9     &      61.3     &      71.7     &       -       &       -       &       -       &       -       &       -       \\
        SEGLLM             &      57.2     &      52.4     & \textbf{81.5} & \textbf{75.4} &      73.0     & \textbf{62.5} & \textbf{73.6} &      67.9     &       -       &       -       &       -       &       -       \\
        Read-7B            &      59.8     &      56.8     &      80.2     &      73.2     &      73.7     &      60.4     &      71.4     &      67.9     &       -       &       -       &       -       &       -       \\
        \midrule
        \multicolumn{13}{l}{\textit{Open-Source MLLM and Reinforcement Fine-Tuning Based Methods}}\\
        Qwen2.5-VL-7B      &      56.9     &      52.1     &      77.9     &      66.5     &      74.0     &      55.6     &      70.9     &      64.8     &      63.3     &      67.9     &      76.0     &      69.1     \\
        Seg-R1-7B          &      58.6     &      56.7     &      78.7     &      67.6     &      70.9     &      57.9     &      71.4     &      66.0     &       -       &       -       &       -       &       -       \\
        Seg-Zero-7B        &      62.6     &      57.5     &      80.3     &      72.2     & \textbf{76.2} &      62.3     &      72.6     &      69.1     &       -       &       -       &       -       &       -       \\
        VisionReasoner-7B  &      66.3     &      63.6     &      77.4     &      67.6     &      71.1     &      55.8     &      68.3     &      67.2     &      70.1     &      69.5     &      87.6     &      75.7     \\
        GroupRevision-7B   &      67.5     &      66.7     &      78.0     &      69.5     &      73.3     &      59.3     &      71.1     &      69.4     & \textbf{75.9} &      73.0     &      91.0     &      80.0     \\
        \midrule
        MCR-GRPO (ours)    & \textbf{69.4} & \textbf{67.2} &      78.0     &      69.2     &      72.1     &      59.5     &      71.6     & \textbf{69.6} &     75.7     & \textbf{76.4} & \textbf{92.7} & \textbf{81.6} \\
        \bottomrule
    \end{tabular}
    \caption{\textbf{Comparison with SOTA on Segmentation and Counting Tasks.} We use SAM2 if necessary in segmentation tasks. The best results are highlighted in \textbf{bold}.}
    \label{tab_SegCount}
\end{table*}

For compact notation, we define the clip and KL terms:
\begin{equation}
\begin{aligned}
L_{m,t}(\theta)&=\min\big(\rho_{m,t}(\theta)A_{m,t}^{\mathrm{MCR\text{-}GRPO}},\\
&\operatorname{clip}(\rho_{m,t}(\theta),1-\epsilon_c,1+\epsilon_c) A_{m,t}^{\mathrm{MCR\text{-}GRPO}} \big).
\end{aligned}
\end{equation}

\begin{equation}
D_{m,t}^{\mathrm{KL}} = D_{\mathrm{KL}} \left( \pi_\theta(\cdot\mid x,q,y_{m,<t}) \;\|\; \pi_{\mathrm{ref}}(\cdot\mid z,q,y_{m,<t}) \right).
\end{equation}

The final objective of MCR-GRPO is:
\begin{equation}
\begin{aligned}
\mathcal{J}_{\mathrm{MCR}}&_{\mathrm{\text{-}GRPO}}(\theta)=\\
&\mathbb{E} \left[ \frac{1}{M} \sum_{m=1}^{M} \frac{1}{|y_m|} \sum_{t=1}^{|y_m|} \left( L_{m,t}(\theta)-\beta D_{m,t}^{\mathrm{KL}} \right) \right].
\end{aligned}
\end{equation}

\section{Experiment}

\paragraph{Training Data.}
We train MCR-GRPO on VisionReasoner7K~\citep{visionreasoner}, a compact multi-object visual perception corpus comprising 7,099 object-record-supervised examples that span category-level localization, referring comprehension, and reasoning instructions. To avoid introducing additional supervision, we use the ground-truth boxes and points as SAM2 prompts to generate masks, rather than relying on human-annotated masks. These sources do not include the held-out DOD, counting, or VQA sets used below.

\paragraph{Evaluation Benchmarks.}
We evaluate the models across five task families: referring expression comprehension (REC), described object detection (DOD), segmentation, object counting, and visual question answering (VQA). For REC, we report results on RefCOCO(+/g)~\citep{refcoco} and additionally evaluate on ReasonG, which is derived by converting ReasonSeg~\citep{reasonseg} masks into bounding boxes. For described object detection, we use the $D^3$ benchmark~\citep{d3}. For segmentation, we evaluate referring segmentation on RefCOCO(+/g) and reasoning segmentation on ReasonSeg. For counting, we use PixMo-Count~\citep{pixmocount} and CountBench~\citep{countbench}. We further include standard VQA benchmarks as auxiliary probes to examine whether structured-perception training preserves general multimodal ability.

\paragraph{Implementation Details.}
We initialize the MLLM from Qwen2.5-VL-7B-Instruct~\citep{qwen25vl} and use pretrained SAM2~\citep{sam2} weights to generate segmentation masks when mask outputs are required. Training is implemented with the VeRL~\citep{verl} framework, and rollout generation is accelerated by vLLM~\citep{vllm}. Unless otherwise specified, we train with a learning rate of $1\times10^{-6}$, a KL coefficient of $5\times10^{-3}$, gradient accumulation of 2, and a global batch size of 16. For each prompt, we sample 8 rollouts for group-relative optimization.

\paragraph{Evaluation Metrics.}
Following standard evaluation protocols, we adopt Acc@0.5 for REC, where a prediction is correct if its prediction has an IoU of least 0.5. For DOD, we report the standard detection AP. We use generalized Intersection over Union (gIoU) for reasoning segmentation and cumulative IoU (cIoU) for referring segmentation. For counting, we report accuracy based on predicted box count.

\paragraph{REC and DOD.}
We compare MCR-GRPO with current state-of-the-art methods on REC and DOD tasks, including PerceptionGPT-7B~\citep{perceptiongpt}, VistaLLM-7B~\citep{vistallm}, Elysium-7B~\citep{elysium}, Groma-7B~\citep{groma}, Qwen2.5-VL-7B~\citep{qwen25vl}, SegZero-7B~\citep{segzero}, VisionReasoner-7B~\citep{visionreasoner}, and GroupRevision-7B~\citep{grouprevision}. As shown in Table~\ref{tab_RECDOD}, our method achieves the best average REC accuracy of $86.8\%$. On the DOD benchmark $\mathrm{D^3}$, MCR-GRPO also consistently outperforms previous methods, achieving the best AP with $23.1\%$, $22.5\%$, and $24.8\%$, respectively.

\begin{table}[!t]
    \centering
    \small
    \begin{tabular}{c|ccc}
        \toprule
        \multirow{2}{*}{Model}&   OCRBench  &    ChartQA    &   SimpleVQA   \\
                        &      Num      &       Acc     &      Acc      \\
        \midrule
        Qwen2.5-VL-7B   &    858      &      83.8     &      26.4     \\
        MCR-GRPO (ours) & \textbf{874} & \textbf{87.7} & \textbf{26.5} \\      
        \bottomrule
    \end{tabular}
    \begin{tabular}{c|ccc}
        \toprule
        \multirow{2}{*}{Model}&    DUDE   &    MMStar     & MME-Realworld \\
                          &      Acc      &      Acc      &   Acc (Lite)  \\
        \midrule
        Qwen2.5-VL-7B     &      47.9     &      62.0     &      43.3     \\
        MCR-GRPO (ours)   & \textbf{48.9} & \textbf{62.6} & \textbf{48.2} \\      
        \bottomrule
    \end{tabular}
    \caption{Evaluation results on VQA benchmarks.}
    \label{tab_vqa}
\end{table}
\begin{table}[t]
    \centering
    \small
    \begin{tabular}{cc|ccccc}
        \toprule
        \multirow{3}{*}{MCR} & \multirow{3}{*}{Cont} & \multicolumn{2}{c}{REC} & DOD & Seg & Count\\
                          &                   & \multicolumn{2}{c}{RefCOCOg}  & $\mathrm{D^3}$ &   ReasonSeg   &     Pixmo     \\
                          &                   &      Val      &      Test     &      Full      &      Val      &      Val      \\
        \midrule
        $\mathrm{\times}$ & $\mathrm{\times}$ &      86.1     &      87.5     &      22.0      &      66.3     &      70.1     \\
        \checkmark        & $\mathrm{\times}$ &      88.9     &      89.4     &      22.9      &      68.4     &      67.1     \\
        $\mathrm{\times}$ &     \checkmark    &      88.1     &      88.3     &      22.0      &      68.5     &      69.5     \\
        \checkmark        &     \checkmark    & \textbf{89.1} & \textbf{89.9} & \textbf{23.1}  & \textbf{69.4} & \textbf{75.7} \\        
        \bottomrule
    \end{tabular}
    \caption{\textbf{Ablation Study of Our Approach} on the VisionReasoner baseline with Marginal Contribution Reward (MCR) and Continuous Matched Set Value Evaluator (Cont).}
    \label{tab_ablation}
\end{table}

\paragraph{Segmentation and Counting.}
We further compare MCR-GRPO with state-of-the-art methods on segmentation and counting tasks, including LLaVA-OV-7B~\citep{llavaov}, LISA-7B~\citep{reasonseg}, PixelLM-7B~\citep{pixellm}, PerceptionGPT-7B~\citep{perceptiongpt}, SEGLLM~\citep{segllm}, Read-7B~\citep{read}, Qwen2.5-VL-7B~\citep{qwen25vl}, and Seg-R1-7B~\citep{segr1} besides the baselines discussed above. As shown in Table~\ref{tab_SegCount}, MCR-GRPO achieves the best average performance on both segmentation and counting, reaching $69.6\%$ on segmentation and $81.6\%$ on counting. These indicate that, by assigning marginal contribution rewards to box-level spatial structures, our method strengthens the model's ability to preserve object identity, distinguish target instances, and maintain count consistency without introducing additional trajectory.

\paragraph{Visual QA Ability.}
To evaluate whether MCR-GRPO preserves and improves general visual question answering ability, we compare it with Qwen2.5-VL-7B on six widely used VQA benchmarks, including DUDE~\citep{dude}, ChartQA~\citep{chartqa}, SimpleVQA~\citep{simplevqa}, OCRBench~\citep{ocrbench}, MMStar~\citep{mmstar}, and MME-Realworld-Lite~\citep{mmerealworld}. As shown in Table~\ref{tab_vqa}, MCR-GRPO consistently improves the model across all evaluated benchmarks. These results indicate that our training does not degrade the model's general VQA ability. Instead, by improving structured visual perception, MCR-GRPO brings consistent gains to VQA scenarios beyond the tasks directly optimized during training.

\paragraph{Ablation of the Key Components.}
We conduct ablation studies on four representative tasks, including REC, DOD, segmentation, and counting, to analyze the effect of the two key designs in MCR-GRPO, namely Marginal Contribution Reward (MCR) and the Continuous Matched Set Value Evaluator (Cont). We use VisionReasoner as the baseline. When Cont is removed, the value function used by MCR is VisionReasoner's original Accuracy Reward. Table~\ref{tab_ablation} disentangles the two designs. Adding the continuous evaluator alone improves REC ($+2.0\%$ / $+0.8\%$ on RefCOCOg) and reasoning segmentation ($+2.2\%$) by making response-level ranking smoother, but it leaves DOD unchanged and slightly lowers counting ($69.5\%$ vs. $70.1\%$): a smoother scalar is still broadcast over all boxes and cannot tell the model which box to drop. Applying MCR alone on top of the thresholded value improves REC and DOD, yet degrades counting below the baseline: on a discretized value, removing a box either leaves the score unchanged or changes it abruptly, so marginal contributions become sparse and noisy, and the resulting credit misleads cardinality-sensitive behavior. Only the combination improves all four tasks, with the largest margin exactly on counting ($+5.6\%$ over baseline and $+8.6\%$ over MCR alone). The two designs are therefore complementary rather than independently additive: the continuous matched set value is the substrate that makes leave-one-out attribution informative, and box-level attribution is the mechanism that converts graded set-level feedback into cardinality-aware learning.

\begin{table}[t]
    \centering
    \small
    \begin{tabular}{c|ccccc}
        \toprule
        \multirow{3}{*}{MCR Ratio}& \multicolumn{2}{c}{REC} & DOD & Seg & Count\\
             & \multicolumn{2}{c}{RefCOCOg}  & $\mathrm{D^3}$ &   ReasonSeg   &     Pixmo     \\
             &      Val      &      Test     &      Full      &      Val      &      Val      \\
        \midrule
        0.05 &      88.1     &      88.7     &      22.9      &      68.8     &      72.9     \\
        0.10 &      89.1     & \textbf{89.9} & \textbf{23.1}  & \textbf{69.4} &      75.7     \\
        0.20 & \textbf{89.6} &      89.6     &      22.9      &      67.0     & \textbf{75.9} \\       
        \bottomrule
    \end{tabular}
    \caption{Ablation of MCR Ratio ($\lambda_{\mathrm{mcr}}$).}
    \label{tab_mcr}
\end{table}

\paragraph{Ablation of MCR Ratio.}
We further study the effect of the MCR ratio ($\lambda_{\mathrm{mcr}}$), which controls the strength of the box-level MCR advantage in the final policy update. As shown in Table~\ref{tab_mcr}, a small ratio of 0.05 already brings competitive performance, but the MCR signal is not strong enough to fully exploit response-internal box-level credit. Increasing the ratio to 0.10 gives the best overall trade-off, achieving the highest REC accuracy, DOD AP, and Segmentation gIoU, while maintaining strong counting accuracy. When the ratio is further increased to 0.20, the model obtains slightly higher RefCOCOg Val and PixMo Val results, but its performance drops on $\mathrm{D^3}$ and ReasonSeg. This suggests that overly emphasizing box-level residual credit may weaken the balance with the original response-level GRPO objective. Therefore, we set the MCR ratio to 0.10 by default, as it provides a stable balance between response-level preference optimization and fine-grained object-record credit assignment. \textbf{Additional qualitative results} are provided in the \textbf{Appendix}.

\section{Conclusion}
This paper addresses the response-internal credit assignment problem in GRPO-based structured visual perception. We propose MCR-GRPO, a GRPO framework that assigns box-level marginal contribution rewards to structured object records while preserving the original response-level preference comparison. The core idea is to evaluate the predicted object set with a continuous matched set value and then estimate each box's contribution through leave-one-out comparison, so that helpful, redundant, and harmful object records can receive different learning signals. Experiments on REC, DOD, segmentation, counting, and VQA show that MCR-GRPO improves structured visual perception across multiple benchmarks. MCR-GRPO enables object-level visual records to serve as the basic units for learning, credit assignment, and multi-object understanding in MLLMs, which offers a new perspective for structured visual perception.

\renewcommand{\refname}{References}
\putbib[ref]
\end{bibunit}

\restoremergedtitlecommands
\title{Supplementary Material\\
       Credit the Right Box: Marginal Contribution Assignment\\
       for Structured Visual Perception}
\author{
    Xinheng Han\textsuperscript{\rm 1,\rm 2}\thanks{Work done during internship at Amap, Alibaba Group.}, 
    Jianfei Wang\textsuperscript{\rm 2},
    Yu Chen\textsuperscript{\rm 2},
    Xiang Wang\textsuperscript{\rm 2}\thanks{Project Leader}
    Shuai Li\textsuperscript{\rm 2},
    Weixing Li\textsuperscript{\rm 1},
    Feng Pan\textsuperscript{\rm 1}\corresponding
}
\affiliations{
    \textsuperscript{\rm 1}School of Automation, Beijing Institute of Technology\\
    \textsuperscript{\rm 2}Amap, Alibaba Group\\
    \{hanxinheng, panfeng\}@bit.edu.cn
}

\begin{bibunit}
\setcounter{footnote}{0}
\makeatletter
\let\c@aaai@eqfn\@undefined
\let\c@aaai@eqfn\@undefined
\let\c@aaai@corrfn\@undefined
\let\titlearea\@undefined
\let\actualheight\@undefined
\makeatother
\maketitle

\setcounter{secnumdepth}{2}
\setcounter{figure}{0}
\setcounter{table}{0}
\setcounter{equation}{0}
\appendix
\renewcommand{\theequation}{\thesection\arabic{equation}}
\renewcommand{\thetable}{\thesection\arabic{table}}
\renewcommand{\thefigure}{\thesection\arabic{figure}}

\section{Implementation Details}

We initialize the MLLM from Qwen2.5-VL-7B-Instruct~\citep{qwen25vl} and use pretrained SAM2~\citep{sam2} weights to generate segmentation masks when mask outputs are required. Training is implemented with the VeRL~\citep{verl} framework, and rollout generation is accelerated by vLLM~\citep{vllm}. We train on a single node with eight NVIDIA H20 GPUs, each with $141$~GB of memory, and set vLLM's gpu memory utilization to $0.6$.

We train for one epoch on the $7,099$-sample VisionReasoner7K~\citep{visionreasoner} training set with a rollout batch size of $16$ and drop last is true, resulting in $443$ steps. We empirically find that checkpoints around $400$ training steps are sufficiently optimized and yield stronger performance. Therefore, our evaluation mainly focuses on this training stage. Unless otherwise specified, we use AdamW as the optimizer, training with a learning rate of $1\times10^{-6}$, a KL coefficient of $5\times10^{-3}$, gradient accumulation of $2$, a global batch size of $16$, and gradient clipping with a maximum norm of $1.0$.

For each prompt, we sample $8$ rollouts for group-relative optimization. The maximum number of generated tokens per rollout is $2000$. Rollout sampling uses a temperature of $1.0$, top-$p$ of $1.0$, and disabled top-$k$ sampling.

Notably, all images are resized to $840 \times 840$ before inference in both training and testing.

\section{Reward and Evaluator Details}

Before computing the response-level reward, we first validate whether a sampled response can be deterministically parsed into structured object records. This validation is necessary because both the continuous matched set value and the MCR attribution require a well-defined predicted object set $S$. If the response format is invalid, the object records cannot be reliably extracted, and the subsequent matching-based reward computation becomes undefined.

A response is treated as structurally valid only when all required components are present. Specifically, it must contain a complete reasoning region and a complete answer region, delimited by \texttt{<think>...</think>} and \texttt{<answer>...</answer>}, respectively. The content inside \texttt{<answer>...</answer>} must be parseable as a JSON array, where each element is a dictionary representing one predicted object record. For every predicted object record, the \texttt{bbox\_2d} field must be present and contain exactly four numerical values, and the \texttt{point\_2d} field must be present and contain exactly two numerical values. These requirements ensure that each predicted item can be converted into a box-indexed object record with a valid point.

We use a conjunctive all-pass rule for format validation. Let $F_m\in\{0,1\}$ denote the structural validity indicator for the sampled response $y_m$. We set $F_m=1$ only if the reasoning-answer structure, JSON parsing, array structure, dictionary format, and all required object fields are valid simultaneously. If any requirement fails, we set $F_m=0$. This differs from partial format scoring: an invalid response does not receive separate partial rewards for individual fields, because a single malformed component can prevent the construction of the predicted object set $S$ and thus block the evaluator and MCR computation.

Given this structural gate, the response-level reward is defined as:
\begin{equation}
R_m^{\mathrm{seq}}
=
F_m
\left(
4 + 1.5 R_m^{\mathrm{nr}} + V(S,G)
\right),
\end{equation}
where $R_m^{\mathrm{nr}}\in\{0,1\}$ is the non-repetition reward and $V(S,G)$ is the continuous matched set value. Following VisionReasoner, $R_m^{\mathrm{nr}}$ is set to 1 unless the response contains repeated predictions.

The constant 4 serves two purposes. First, it creates a clear reward separation between structurally valid and invalid responses after group-relative normalization, so responses that cannot be parsed receive a strongly negative advantage. Second, it aligns the base reward scale with the accuracy term. Since the evaluator uses $\alpha=2$ and $\beta=\gamma=1$, the maximum pair score is 4, and the matched set value satisfies $V(S,G)\le 4$. Therefore, the base reward has the same scale as the maximum accuracy reward, making format validity and structured prediction quality comparable within the response-level reward. Thus, format-valid responses are rewarded on a scale that can still be refined by non-repetition and matched-set accuracy, while format-invalid responses receive zero response-level reward and no object-level MCR attribution.

\section{Protocol and Prompt Comparison}
\label{app:inference-prompt-comparison}

\begin{table}[t]
\centering
\small
\begin{tabular}{lccc}
\toprule
Method         &  Supervision  & Stages & Test-Time\\
\midrule
Qwen2.5-VL     &   base MLLM   &    1   & Initial         \\
VisionReasoner &    Response   &    1   & Initial Struct  \\
GroupRevision  &   Trajectory  &    2   & Revised Struct  \\
MCR-GRPO       & Object-record &    1   & Initial Struct  \\
\bottomrule
\end{tabular}
\caption{\textbf{Comparison of Supervision Strategies and Inference Stages.} VisionReasoner uses response supervision, GroupRevision introduces trajectory supervision through an additional revision pass, and MCR-GRPO performs object-record supervision within a single structured response.}
\label{tab:app-strategy-comparison}
\end{table}

\begin{table}[t]
\centering
\small
\begin{tabular}{|c|}
\toprule
\textbf{Qwen2.5-VL Prompt} \\
\midrule
Locate "\{\textit{query}\}", report the bboxes coordinates\\ in JSON format.\\
\midrule
\textbf{MCR-GRPO \& VisionReasoner Prompt}\\
\midrule
Please find “\{\textit{query}\}” with bboxs and points.\\
Compare the difference between object(s) and find the most\\
closely matched object(s). Output the thinking process in\\
\texttt{<think> </think>} and final answer in \texttt{<answer> }\\
\texttt{</answer>} tags. Output the bbox(es) and point(s) inside\\
the interested object(s) in JSON format.\\
i.e. \texttt{<think>} thinking process here \texttt{</think>}\\
\texttt{<answer>} \{\textit{example}\} \texttt{</answer>}\\
\midrule
\textbf{GroupRevision Stage} \textbf{\uppercase\expandafter{\romannumeral 1}} \textbf{(Init) Prompt}\\
\midrule
Please find "\{\textit{query}\}" with bounding boxes and points.\\
Compare the difference between object(s) and identify the\\
most closely matched one(s). Output the thinking process\\
in \texttt{<think> ... </think>} and the final answer in\\
\texttt{<answer> ... </answer>} tags. Return the bbox(es)\\
and point(s) of the referenced object(s) in JSON format.\\
i.e., \texttt{<think>} thinking process here \texttt{</think>}\\
\texttt{<answer>} \{\textit{example}\} \texttt{</answer>}\\
\midrule
\textbf{GroupRevision Stage} \textbf{\uppercase\expandafter{\romannumeral 2}} \textbf{(Revision) Prompt}\\
\midrule
You previously predicted bounding box(es) (\{\textit{box}\}) and\\
point(s) (\{\textit{points}\}) for the question "\{\textit{query}\}", with the\\
reasoning \{\textit{think}\}. Rethink whether the previous bbox(es)\\
and point(s) match the
target object(s).\\
– If yes: keep the same object(s); tighten each bbox to the\\
object boundary and set the point to the object center.\\
– If no: discard previous predictions and output the correct\\
object(s), each with one bbox and one point inside it.\\
Respond with exactly:\\
\texttt{<think>} thinking process here \texttt{</think>}\\
\texttt{<answer>} \{\textit{example}\} \texttt{</answer>}\\
\bottomrule
\end{tabular}
\caption{\textbf{Prompt Comparison Across Methods.} MCR-GRPO, Qwen2.5-VL, and VisionReasoner use a single initial prompt, while GroupRevision adds a second-stage revision prompt.}
\label{tab:app-prompt-comparison}
\end{table}

We compare the supervision strategies and inference protocols of different models. As shown in Table~\ref{tab:app-strategy-comparison}, Qwen2.5-VL~\citep{qwen25vl} serves as the base MLLM and directly generates the initial response. VisionReasoner~\citep{visionreasoner} follows response supervision, where a single response-level signal is assigned to the whole structured answer. GroupRevision~\citep{grouprevision} uses trajectory supervision by adding a second-stage revision pass conditioned on the initial prediction. In contrast, MCR-GRPO performs object-record supervision within the original structured response, assigning box-level credit during training while preserving a single-stage inference protocol.

Table~\ref{tab:app-prompt-comparison} lists the prompt templates used in our comparison. VisionReasoner and MCR-GRPO use the same structured-output prompt, while GroupRevision additionally uses a second-stage revision prompt.

\section{Additional Experiments on COCO}
\label{app:coco-additional-results}

We further evaluate MCR-GRPO on COCO to examine whether response-internal box-level credit assignment generalizes to a standard object detection benchmark. We compare our model with Qwen2.5-VL~\citep{qwen25vl}, VisionReasoner~\citep{visionreasoner}, and GroupRevision~\citep{grouprevision}. We evaluate on COCO val2017 using the standard COCO detection protocol. All methods are evaluated with the same image resizing, decoding configuration, prompt format, and post-processing rules. As shown in Table~\ref{tab:app-coco-main}, MCR-GRPO achieves the best COCO AP among the compared methods. Compared with Qwen2.5-VL, MCR-GRPO improves AP from $29.2\%$ to $39.7\%$. Compared with VisionReasoner, MCR-GRPO improves AP from $37.7\%$ to $39.7\%$. Notably, MCR-GRPO also slightly outperforms GroupRevision, while using only a single inference stage and requiring no additional revision pass.

\begin{table}[t]
\centering
\small
\begin{tabular}{lcc}
\toprule
Method            & Extra Stage       & COCO AP$_{50:95}$ \\
\midrule
Qwen2.5-VL-7B     & $\mathrm{\times}$ &      29.2         \\
VisionReasoner-7B & $\mathrm{\times}$ &      37.7         \\
GroupRevision-7B  &    \checkmark     &      39.2         \\
\midrule
MCR-GRPO (Ours)   & $\mathrm{\times}$ & \textbf{39.7}     \\
\bottomrule
\end{tabular}
\caption{\textbf{Additional COCO Results.} MCR-GRPO achieves the best COCO AP without introducing an extra inference stage.  The best results are highlighted in \textbf{bold}.}
\label{tab:app-coco-main}
\end{table}

To further compare inference protocols, Table~\ref{tab:app-coco-stage} reports detailed COCO metrics for GroupRevision and MCR-GRPO. GroupRevision uses a two-stage inference procedure, where the model first generates an initial response and then performs an additional revision stage. In contrast, MCR-GRPO keeps the inference pipeline single-stage: it derives box-level supervision during training, but does not require an additional response generation or revision trajectory at test time. Despite this simpler inference protocol, MCR-GRPO obtains higher AP, AP$_{50}$, AP$_{75}$, AR$_{10}$, and AR$_{100}$ than GroupRevision-7B.

\begin{table}[t]
\centering
\small
\begin{tabular}{lcccc}
\toprule
Method           &    Extra Stage     &       AP      &    AP$_{50}$  &    AP$_{75}$  \\
\midrule
GroupRevision-7B &     \checkmark    &      39.2     &      57.9     &      40.7     \\
MCR-GRPO (Ours)  & $\mathrm{\times}$ & \textbf{39.7} & \textbf{59.2} & \textbf{40.8} \\
\bottomrule
\end{tabular}
\begin{tabular}{lcccc}
\toprule
Method           &    Extra Stage    &     AR$_1$    &   AR$_{10}$   &   AR$_{100}$  \\
\midrule
GroupRevision-7B &     \checkmark    & \textbf{35.5} &      49.5     &      50.1     \\
MCR-GRPO (Ours)  & $\mathrm{\times}$ &      35.0     & \textbf{49.7} & \textbf{50.2} \\
\bottomrule
\end{tabular}
\caption{\textbf{Detailed COCO Comparison Between GroupRevision-7B and MCR-GRPO.} MCR-GRPO uses a single-stage inference protocol and achieves stronger overall detection performance. The best results are highlighted in \textbf{bold}.}
\label{tab:app-coco-stage}
\end{table}

These results suggest that the gain of MCR-GRPO does not rely on test-time revision. Instead, MCR-GRPO improves the model during training by assigning marginal contribution rewards to box-level object records, allowing the final model to produce stronger detections in a single inference pass.

\section{Qualitative Comparison}
We qualitatively compare our MCR-GRPO with VisionReasoner~\citep{visionreasoner} across Described Object Detection (DOD), Segmentation, Referring Expression Comprehension (REC), and Counting.
Figures~\ref{fig_dod}--\ref{fig_count} show the query, the prediction from each model, and the ground truth under the same task-specific visualization protocol.
These examples complement the aggregate results in the main paper by illustrating differences in instance selection, localization, and cardinality.

\paragraph{Described object detection.}
Figure~\ref{fig_dod} presents examples from the $\mathrm{D^3}$ benchmark~\citep{d3}, whose queries contain attributes, relations, and negated descriptions. In the displayed cases, MCR-GRPO more accurately identifies the complete set of objects satisfying the description while excluding visually similar distractors. This distinction is especially visible for multi-object queries, which require both description-consistent localization and correct cardinality.

\paragraph{Segmentation and REC.}
Figure~\ref{fig_seg} shows reasoning segmentation examples from ReasonSeg~\citep{reasonseg} and referring segmentation examples from RefCOCO, RefCOCO+, and RefCOCOg~\citep{refcoco}. All samples can also be used as REC examples. These queries require spatial-relation reasoning, ordinal disambiguation, or recognition from an indirect description. In the displayed cases, MCR-GRPO more reliably selects the intended instance and produces spatial outputs that align more closely with the target extent. For segmentation, both methods use the same SAM2-based mask-generation pipeline described in the main paper.

\paragraph{Counting.}
Figure~\ref{fig_count} compares the two models on PixMo-Count~\citep{pixmocount} and CountBench~\citep{countbench}. As in the main experiments, the predicted count is obtained from the number of generated object records, without additional counting-specific supervision. In the displayed scenes, MCR-GRPO more closely matches the ground-truth cardinality by reducing missed targets and redundant predictions while retaining a localized box for each counted instance.

\begin{figure*}[!t]
\centering
\includegraphics[width=0.8\textwidth]{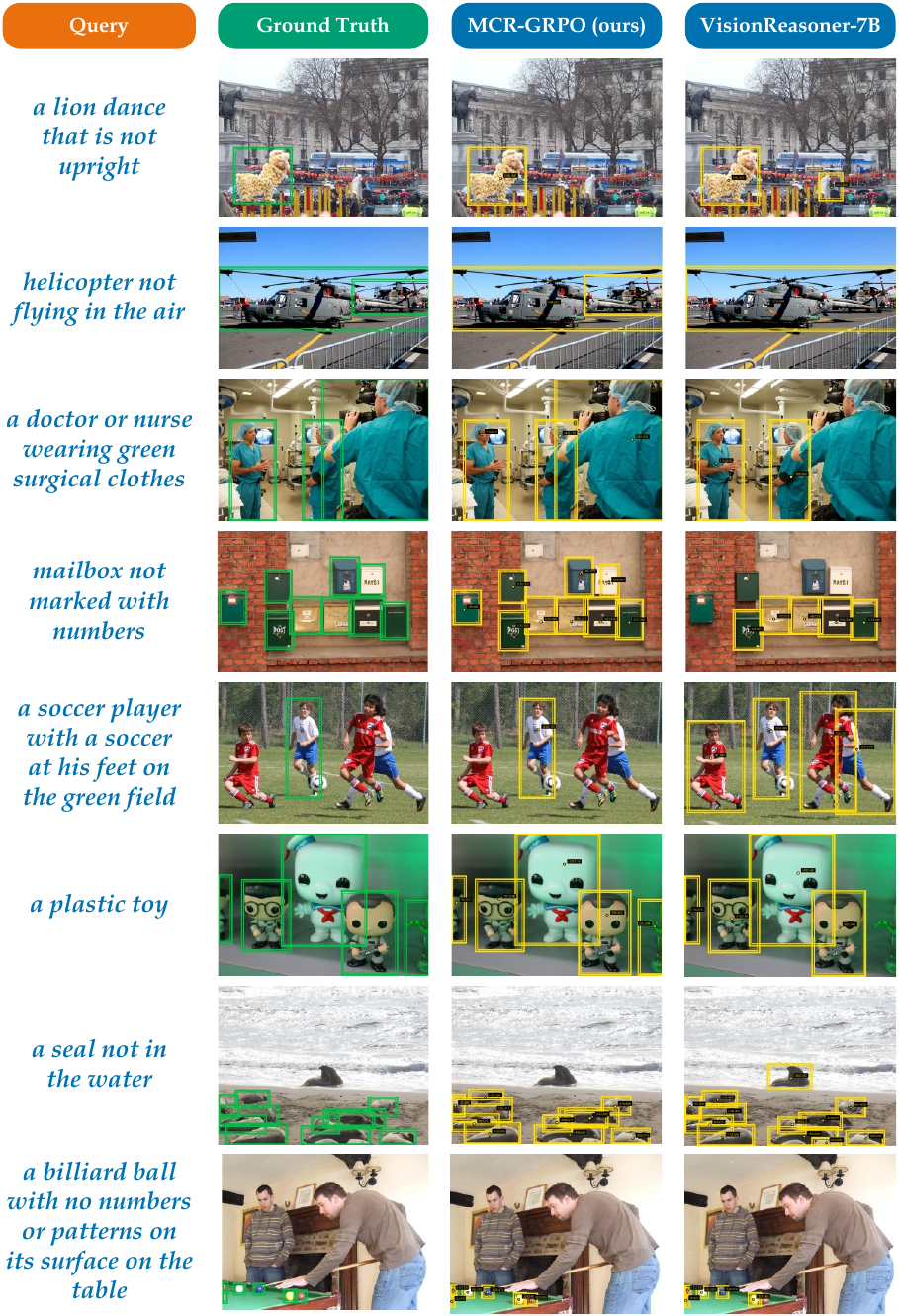}
\caption{\textbf{Qualitative Comparison with VisionReasoner on Described Object Detection.} Examples are drawn from the $\mathrm{D^3}$ benchmark. The displayed queries cover attribute-rich, relational, and negated descriptions. We present visualizations of the model-predicted boxes and points overlaid on the images.}\label{fig_dod}
\end{figure*}

\begin{figure*}[!t]
\centering
\includegraphics[width=\textwidth]{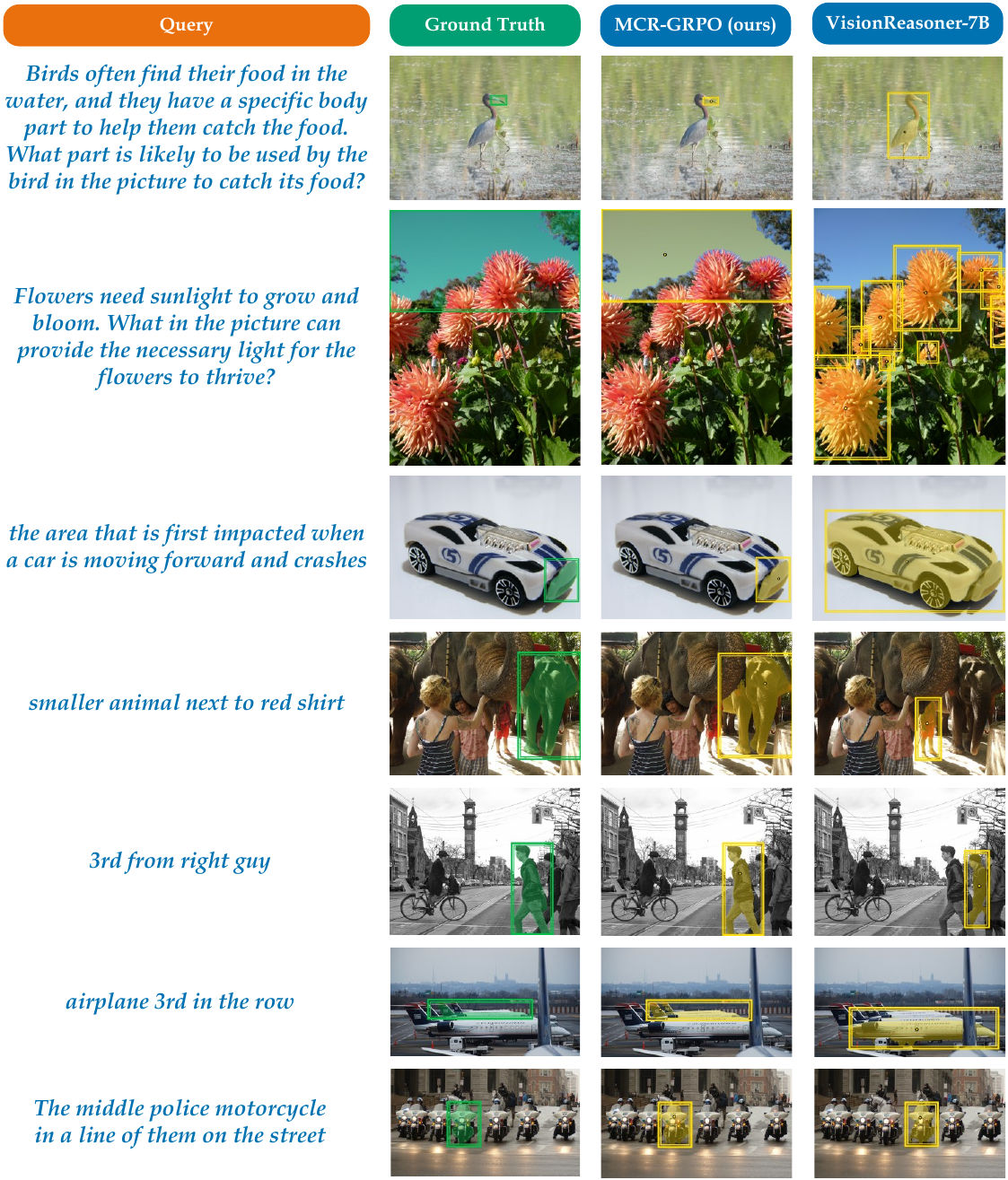}
\caption{\textbf{Qualitative comparison with VisionReasoner on Segmentation and Referring Expression Comprehension.} Examples are drawn from ReasonSeg and RefCOCO(+/g). The displayed queries involve reasoning-dependent descriptions, spatial relations, and ordinal cues. We present visualizations of the model-predicted boxes and points, as well as the masks generated by SAM2, overlaid on the images.}\label{fig_seg}
\end{figure*}

\begin{figure*}[!t]
\centering
\includegraphics[width=0.82\textwidth]{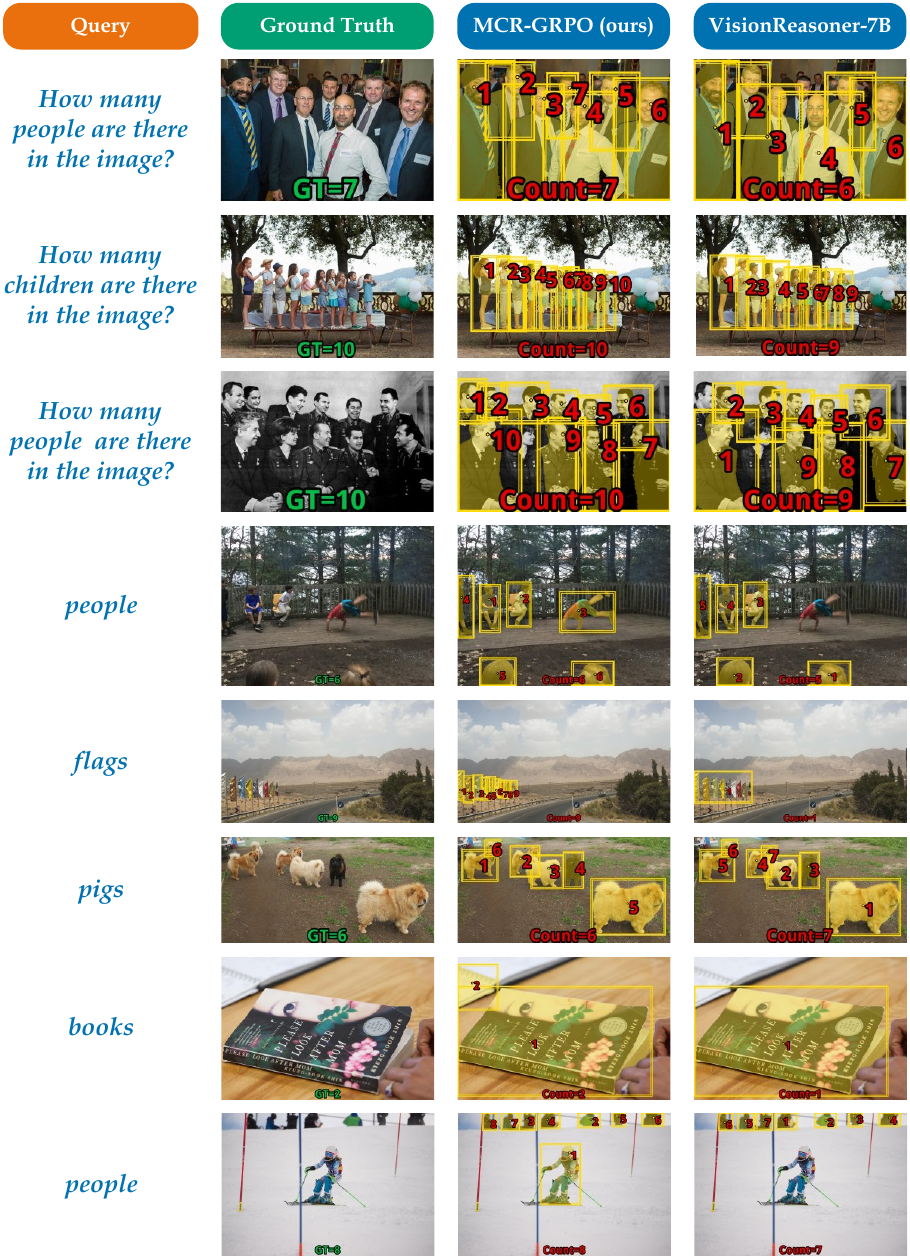}
\caption{\textbf{Qualitative Comparison with VisionReasoner on Counting.} Examples are drawn from PixMo-Count and CountBench. The predicted count is the number of localized object records. We present visualizations of the model-predicted boxes and points, as well as the masks generated by SAM2, overlaid on the images. In addition, we visualize the model's counting process and the resulting counts on the images.}\label{fig_count}
\end{figure*}

\section{Formal Analysis of Marginal Contributions}

In this section, we provide a formal analysis of the leave-one-out MCR credit used in the main text. The analysis separates the raw value difference $\Delta V_i$ from the normalized residual credit $\bar A_i^{\mathrm{mcr}}$. The raw value captures how removing one predicted record changes the count-normalized matched set value, while the subsequent within-response normalization converts these value differences into signed object-record credits for policy optimization. This analysis explains why unmatched records receive negative raw credit under over-prediction, and why redundant or weak records can be suppressed by negative normalized credit within the same response.

\paragraph{Setting.}
Let $S=\{d_i\}_{i=1}^{K}$ be the predicted object records and $G=\{g_j\}_{j=1}^{N}$ be the ground-truth objects. Pair scores satisfy $s(d_i,g_j)\ge 0$. For any predicted subset $T\subseteq S$, let $\mathcal M(T,G)$ denote the set of valid one-to-one matchings between $T$ and $G$. Following the matching objective in the main text, define:
\begin{equation}
M_T^\star = \arg\max_{M'\in\mathcal M(T,G)} \sum_{(r,j)\in M'}s(d_r,g_j).
\end{equation}
The matched set value for $T$ is:
\begin{equation}
V(T,G) = \frac{1}{\max(|T|,N)} \sum_{(r,j)\in M_T^\star}s(d_r,g_j).
\end{equation}
In particular, the raw leave-one-out value used by MCR is:
\begin{equation}
\Delta V_i = V(S,G)-V(S\setminus\{d_i\},G).
\end{equation}

For compactness, we write $M_S^\star$ for the optimal matching on $S$, and $M_{-i}^\star$ for the optimal matching on $S\setminus\{d_i\}$. A predicted record $d_i$ is called unmatched if there exists an optimal matching $M_S^\star$ that does not include $d_i$.

\begin{lemma}[Removal bounds]
\label{lem:removal}
For any $d_i\in S$: (a) Removing $d_i$ cannot increase the optimal matched sum:
\begin{equation}
\sum_{(r,j)\in M_{-i}^\star}s(d_r,g_j) \le \sum_{(r,j)\in M_S^\star}s(d_r,g_j).
\end{equation}

(b) If $d_i$ is matched to $g_{j(i)}$ in an optimal matching $M_S^\star$, then:
\begin{equation}
\sum_{(r,j)\in M_{-i}^\star}s(d_r,g_j) \ge \left[ \sum_{(r,j)\in M_S^\star}s(d_r,g_j)\right] - s(d_i,g_{j(i)}).
\end{equation}

(c) If $d_i$ is unmatched in some optimal matching $M_S^\star$, then:
\begin{equation}
\sum_{(r,j)\in M_{-i}^\star}s(d_r,g_j) = \sum_{(r,j)\in M_S^\star}s(d_r,g_j).
\end{equation}
\end{lemma}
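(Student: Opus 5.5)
The plan is to argue each part directly from the definition of $M_T^\star$ as a maximizer over $\mathcal M(T,G)$, using only two facts. First, the matching sets are monotone under restriction: any valid one-to-one matching between $S\setminus\{d_i\}$ and $G$ is also a valid matching between $S$ and $G$, so $\mathcal M(S\setminus\{d_i\},G)\subseteq\mathcal M(S,G)$. Second, deleting a pair from a valid matching leaves a valid matching. Nonnegativity of the pair scores is available but only needs to be used lightly.

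For (a), I take the optimal matching $M_{-i}^\star$ on $S\setminus\{d_i\}$ and view it as an element of $\mathcal M(S,G)$ via the inclusion above. Optimality of $M_S^\star$ over the larger feasible set then gives
\[
\sum_{(r,j)\in M_{-i}^\star}s(d_r,g_j)\le\sum_{(r,j)\in M_S^\star}s(d_r,g_j)
\]
immediately.

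For (b), I start from an optimal $M_S^\star$ containing the pair $(i,j(i))$ and form $M' = M_S^\star\setminus\{(i,j(i))\}$. This is still one-to-one and uses no pair involving $d_i$, so $M'\in\mathcal M(S\setminus\{d_i\},G)$. Its total score is the optimal sum on $S$ minus $s(d_i,g_{j(i)})$. Since $M_{-i}^\star$ maximizes over a set containing $M'$, the stated lower bound follows.

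For (c), I take an optimal $M_S^\star$ that does not contain $d_i$, as the definition of ``unmatched'' allows. Then $M_S^\star$ itself lies in $\mathcal M(S\setminus\{d_i\},G)$, so the optimal sum without $d_i$ is at least the optimal sum on $S$. Combining this with (a) gives equality.

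I do not expect a real obstacle here. The one point needing care is that ``unmatched'' is defined existentially, by the existence of some optimal matching omitting $d_i$. Part (c) must therefore be proved using that particular optimal matching, and not an arbitrary one returned by the Hungarian algorithm. Since all three inequalities compare optimal \emph{values}, which do not depend on which maximizer is chosen, this causes no difficulty once it is stated explicitly.
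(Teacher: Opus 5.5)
Your proposal is correct and follows the paper's proof essentially step for step: (a) holds because any matching on $S\setminus\{d_i\}$ is feasible for $S$, (b) follows by deleting the pair $(i,j(i))$ from $M_S^\star$, and (c) follows by reusing the optimal matching that omits $d_i$ and invoking (a). The one difference is that the paper also cites nonnegativity of the pair scores in (a), which your inclusion $\mathcal M(S\setminus\{d_i\},G)\subseteq\mathcal M(S,G)$ shows is unnecessary when partial matchings are allowed (nonnegativity would matter only if $\mathcal M$ were restricted to maximum-cardinality assignments).
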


\begin{proof}
(a) Any matching after removing $d_i$ remains feasible for the original predicted set by leaving $d_i$ unused. Since all pair scores are non-negative, the maximum matched sum cannot increase after removal. (b) If $d_i$ is matched, deleting the pair $(i,j(i))$ from $M_S^\star$ gives a feasible matching after removing $d_i$ with the stated weight.
(c) If $d_i$ is unmatched in an optimal matching, the same matching remains feasible after removal, and the bound (a) gives equality.
\end{proof}

\begin{proposition}[Regime-dependent raw sign]
\label{prop:raw-sign}
(a) If $K\le N$, then $\Delta V_i\ge 0$ for every $d_i\in S$. (b) If $K>N$ and $d_i$ is unmatched in some optimal matching, then:
\begin{equation}
\Delta V_i
=
-\frac{1}{K(K-1)}
\sum_{(r,j)\in M_S^\star}s(d_r,g_j)
\le 0.
\end{equation}

The inequality is strict whenever the matched sum is positive.
\end{proposition}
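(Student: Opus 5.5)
The plan is to reduce both parts to Lemma~\ref{lem:removal} together with a direct comparison of the two denominators. Write $W_S=\sum_{(r,j)\in M_S^\star}s(d_r,g_j)$ and $W_{-i}=\sum_{(r,j)\in M_{-i}^\star}s(d_r,g_j)$. Then
\begin{equation*}
\Delta V_i=\frac{W_S}{\max(K,N)}-\frac{W_{-i}}{\max(K-1,N)}.
\end{equation*}
Everything hinges on how the two denominators compare in each regime.

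\textbf{Part (a).} If $K\le N$, then $K-1<N$ as well, so both denominators equal $N$. Hence $\Delta V_i=(W_S-W_{-i})/N$. By Lemma~\ref{lem:removal}(a), $W_{-i}\le W_S$, so $\Delta V_i\ge 0$. The edge case $K=1$ needs no special handling: the empty set has $W_{-i}=0$ with denominator $N$, which is consistent provided $N\ge 1$. If $N=0$ with $K\le N$, then $S$ is empty and there is nothing to prove.

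\textbf{Part (b).} If $K>N$, then $K-1\ge N$, so the denominators are $K$ and $K-1$. Since $d_i$ is unmatched in some optimal matching, Lemma~\ref{lem:removal}(c) gives $W_{-i}=W_S$. Therefore
\begin{equation*}
\Delta V_i=W_S\left(\frac1K-\frac1{K-1}\right)=-\frac{W_S}{K(K-1)}.
\end{equation*}
This is the claimed identity. It is nonpositive because pair scores are nonnegative, so $W_S\ge 0$, and it is strictly negative exactly when $W_S>0$. One small point to check is that $K-1\ge 1$, which follows from $K>N\ge 0$ giving $K\ge 1$. If $K=1$ and $N=0$, both matched sums vanish and the formula reads $0$, with $K(K-1)=0$ as a degenerate denominator. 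I would state the identity for $K\ge 2$, or note that in this case both sides equal $0$ by convention.

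The main subtlety is the meaning of "unmatched in some optimal matching" when optimal matchings are not unique. $V$ depends only on the optimal value $W$, not on which maximizer is chosen, so Lemma~\ref{lem:removal}(c) applies with that particular maximizer. No real obstacle remains beyond careful bookkeeping of the $\max(\cdot,N)$ denominators at the boundary $K-1=N$, where the value is $N=K-1$ and the computation above is unchanged.
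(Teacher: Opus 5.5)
Your proposal is correct and follows the paper's proof: for $K\le N$ both denominators equal $N$ and Lemma~\ref{lem:removal}(a) gives $\Delta V_i\ge 0$, while for $K>N$ Lemma~\ref{lem:removal}(c) keeps the matched sum fixed as the denominator drops from $K$ to $K-1$. Your extra bookkeeping is a useful refinement that the paper leaves implicit: you check the boundary $K-1=N$, and you flag the degenerate case $K=1$, $N=0$, where $V(\emptyset,\emptyset)$ is $0/0$ and $K(K-1)=0$.
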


\begin{proof}
(a) When $K\le N$, both values use denominator $N$, so:
\begin{equation}
\Delta V_i = \frac{1}{N} \left( \sum_{(r,j)\in M_S^\star}s(d_r,g_j) - \sum_{(r,j)\in M_{-i}^\star}s(d_r,g_j) \right) \ge 0
\end{equation}
by Lemma~\ref{lem:removal}. (b) When $K>N$ and $d_i$ is unmatched, the matched sum is unchanged after removal, while the denominator changes from $K$ to $K-1$. Hence:
\begin{equation}
\begin{aligned}
\Delta {V_i}&=\frac{1}{K} \sum_{(r,j)\in M_S^\star}s(d_r,g_j)-\frac{1}{K-1}\sum_{(r,j)\in M_S^\star}s(d_r,g_j) \\
&=-\frac{1}{K(K-1)} \sum_{(r,j)\in M_S^\star}s(d_r,g_j),
\end{aligned}
\end{equation}
with strict negativity whenever the matched sum is positive.

Finally, within-response normalization preserves the ordering of raw marginal contributions. When the within-response standard deviation is nonzero, $\bar A_i^{\mathrm{mcr}}$ is a strictly increasing affine transformation of $\Delta V_i$. Therefore, for any two records $d_i$ and $d_j$,
\begin{equation}
\Delta V_i > \Delta V_j \quad \Rightarrow \quad \bar A_i^{\mathrm{mcr}} > \bar A_j^{\mathrm{mcr}} .
\end{equation}
Thus, MCR guarantees response-relative ordering rather than an absolute helpful-or-harmful sign per box. If all records have identical raw marginal contributions, the variance is zero and we set all normalized MCR credits to zero.

\end{proof}

\begin{proposition}[\textbf{Relative Credit After Normalization}]
\label{prop:relative-credit}
For $K>1$ and $\sigma_\Delta>0$ in Eqs.~(8)--(9), normalized MCR credit is:
\begin{equation}
\bar A_i^{\mathrm{mcr}}
=
\frac{\Delta V_i-\mu_\Delta}{\sigma_\Delta+\epsilon}.
\end{equation}
Thus $\bar A_i^{\mathrm{mcr}}$ preserves the ordering of $\Delta V_i$, and its sign is determined by whether $\Delta V_i$ is above or below the within-response mean $\mu_\Delta$.
\end{proposition}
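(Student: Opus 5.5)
The statement is essentially a direct consequence of the normalization formula, so the plan is to unpack Eqs.~(8)--(9) and read off the two claims. Fix a response with $K>1$ and $\sigma_\Delta>0$. Since $\mu_\Delta$ and $\sigma_\Delta$ are computed once per response and are shared by all records $d_1,\dots,d_K$, the map
\begin{equation}
\phi(v)=\frac{v-\mu_\Delta}{\sigma_\Delta+\epsilon}
\end{equation}
is a single affine function applied to every raw contribution. Thus $\bar A_i^{\mathrm{mcr}}=\phi(\Delta V_i)$ for each $i$, which is the displayed identity.

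\textbf{Ordering.} The slope of $\phi$ is $1/(\sigma_\Delta+\epsilon)$. This is strictly positive because $\sigma_\Delta>0$ by hypothesis and $\epsilon\ge 0$; indeed $\sigma_\Delta+\epsilon>0$ already for $\epsilon>0$. Hence $\phi$ is strictly increasing. For any $i,j$ we get $\Delta V_i>\Delta V_j \iff \bar A_i^{\mathrm{mcr}}>\bar A_j^{\mathrm{mcr}}$, and equality is preserved as well. This repeats the closing observation in the proof of Proposition~\ref{prop:raw-sign}, now stated as an equivalence rather than a one-sided implication.

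\textbf{Sign.} The denominator is positive, so $\operatorname{sign}(\bar A_i^{\mathrm{mcr}})=\operatorname{sign}(\Delta V_i-\mu_\Delta)$. A record therefore receives positive, zero, or negative normalized credit exactly when its raw contribution lies above, at, or below the within-response mean. I would add a short remark connecting this to Proposition~\ref{prop:raw-sign}(a). When $K\le N$, all $\Delta V_i\ge 0$, yet some records still receive negative normalized credit whenever the $\Delta V_i$ are not all equal, since $\sum_i(\Delta V_i-\mu_\Delta)=0$. This zero-sum identity also gives $\sum_i\bar A_i^{\mathrm{mcr}}=0$.

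\textbf{Main obstacle.} There is no real technical difficulty. The only point needing care is making the hypotheses explicit: $\sigma_\Delta>0$, and the positivity of $\sigma_\Delta+\epsilon$, which is what excludes the degenerate $K=1$ and equal-contribution conventions where credits are set to zero. It is also important that $\mu_\Delta$ and $\sigma_\Delta$ do not depend on $i$. That independence is the key step, and it is immediate from Eq.~(8).
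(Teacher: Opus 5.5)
Your proposal is correct and follows essentially the same route as the paper: $\sigma_\Delta+\epsilon>0$ makes the normalization a single strictly increasing affine map with response-wide shift and scale. That one fact gives both the ordering claim and the sign-relative-to-$\mu_\Delta$ claim, and your extras (the two-sided equivalence, and the zero-sum identity forcing some negative credit when $K\le N$ and the $\Delta V_i$ differ) are accurate refinements of the same observation the paper makes.
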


\begin{proof}
The denominator $\sigma_\Delta+\epsilon$ is positive when $\sigma_\Delta>0$ and $\epsilon\ge 0$. Therefore normalization is a strictly increasing affine map of $\Delta V_i$. Centering makes records below the response mean receive negative residual credit, even when their raw $\Delta V_i$ is non-negative. If $\sigma_\Delta=0$, MCR sets all normalized credits to zero.
\end{proof}

\begin{proposition}[\textbf{Lowest Credits Under Over-Prediction}]
\label{prop:rank}
Let $K>N$. For any record $d_k\in S$ and any unmatched record $d_i$,
\begin{equation}
\Delta V_k\ge \Delta V_i.
\end{equation}
Equality holds iff:
\begin{equation}
\sum_{(r,j)\in M_{-k}^\star}s(d_r,g_j) = \sum_{(r,j)\in M_S^\star}s(d_r,g_j).
\end{equation}
Thus unmatched records and perfectly substitutable records occupy the lowest
normalized MCR credits whenever $\sigma_\Delta>0$.
\end{proposition}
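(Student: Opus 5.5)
Write $W_T$ for the optimal matched sum $\sum_{(r,j)\in M_T^\star}s(d_r,g_j)$, so $W_S$ is the full-set sum and $W_{-k}$ the sum after removing $d_k$. The plan is to compute $\Delta V_k$ explicitly in this regime and compare it with the closed form for unmatched records from Proposition~\ref{prop:raw-sign}(b).

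Since $K>N$ implies $K-1\ge N$, both denominators are $\max(K,N)=K$ and $\max(K-1,N)=K-1$. Hence for every $d_k\in S$,
\begin{equation}
\Delta V_k=\frac{W_S}{K}-\frac{W_{-k}}{K-1}.
\end{equation}
For an unmatched $d_i$, Lemma~\ref{lem:removal}(c) gives $W_{-i}=W_S$, so $\Delta V_i=\frac{W_S}{K}-\frac{W_S}{K-1}$, which is exactly Proposition~\ref{prop:raw-sign}(b). Subtracting the two expressions yields
\begin{equation}
\Delta V_k-\Delta V_i=\frac{W_S-W_{-k}}{K-1}.
\end{equation}
By Lemma~\ref{lem:removal}(a), $W_{-k}\le W_S$, so the difference is non-negative. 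It vanishes iff $W_{-k}=W_S$, which is the stated equality condition. The positive factor $1/(K-1)$ makes the "iff" immediate.

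For the final sentence, note that every unmatched record attains the common value $\frac{W_S}{K}-\frac{W_S}{K-1}$. Every perfectly substitutable record (one with $W_{-k}=W_S$) attains this same value, and every other record is strictly larger. Thus this common value is the minimum of $\{\Delta V_k\}$. When $\sigma_\Delta>0$, Proposition~\ref{prop:relative-credit} says normalization is strictly increasing and affine, so these records share the minimal normalized credit $\bar A^{\mathrm{mcr}}$. Moreover, because the minimum lies strictly below the mean when $\sigma_\Delta>0$, that credit is negative.

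The only delicate point is that "unmatched" is defined via \emph{some} optimal matching. Lemma~\ref{lem:removal}(c) already handles this, since the equality $W_{-i}=W_S$ does not depend on which optimal matching is chosen. I therefore expect no real obstacle; the main care is tracking denominators correctly, which needs $K-1\ge N$, and this holds because $K>N$.
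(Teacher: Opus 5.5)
Your proposal is correct and follows essentially the same route as the paper's proof: write $\Delta V_k$ and $\Delta V_i$ with denominators $K$ and $K-1$, use Lemma~\ref{lem:removal}(a) to get $W_{-k}\le W_S$, and carry the ordering through the strictly increasing normalization. Your explicit difference $\Delta V_k-\Delta V_i=(W_S-W_{-k})/(K-1)$ also makes the equality case immediate, which the paper leaves implicit, and it gives the negativity of the minimal normalized credit.
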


\begin{proof}
For the unmatched record $d_i$,
\begin{equation}
\Delta V_i
=
\frac{1}{K}
\sum_{(r,j)\in M_S^\star}s(d_r,g_j)
-
\frac{1}{K-1}
\sum_{(r,j)\in M_S^\star}s(d_r,g_j).
\end{equation}
For any record $d_k$,
\begin{equation}
\Delta V_k
=
\frac{1}{K}
\sum_{(r,j)\in M_S^\star}s(d_r,g_j)
-
\frac{1}{K-1}
\sum_{(r,j)\in M_{-k}^\star}s(d_r,g_j).
\end{equation}
Lemma~\ref{lem:removal} gives:
\begin{equation}
\sum_{(r,j)\in M_{-k}^\star}s(d_r,g_j)
\le
\sum_{(r,j)\in M_S^\star}s(d_r,g_j),
\end{equation}
so $\Delta V_k\ge \Delta V_i$. Since normalization is strictly increasing when $\sigma_\Delta>0$, the same ordering holds after normalization.
\end{proof}

\begin{proposition}[\textbf{Bounded Residual Credit}]
\label{prop:bound}
For $K\ge 2$ and $\epsilon=0$ in Eq.~(9),
\begin{equation}
|\bar A_i^{\mathrm{mcr}}| \le \sqrt{K-1}.
\end{equation}

With $\epsilon>0$, the magnitude can only decrease. Therefore, for object tokens
$t\in\tau_i$ with
\begin{equation}
A_{m,t}^{\mathrm{MCR\text{-}GRPO}} = A_m^{\mathrm{seq}} + \lambda_{\mathrm{mcr}} A_{t,i}^{\mathrm{mcr}},
\end{equation}
the composite token advantage has the same sign as $A_m^{\mathrm{seq}}$ whenever
\begin{equation}
\lambda_{\mathrm{mcr}}\sqrt{K-1} < |A_m^{\mathrm{seq}}|.
\end{equation}
\end{proposition}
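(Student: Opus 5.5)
The bound is a statement about z-scores of any $K$ real numbers; nothing about the matching structure is needed. I would write $x_i=\Delta V_i-\mu_\Delta$, so that $\sum_{i=1}^K x_i=0$ and $K\sigma_\Delta^2=\sum_{i=1}^K x_i^2$. I assume $\sigma_\Delta>0$, since otherwise all credits are set to zero and the claim is trivial. With $\epsilon=0$ the target inequality becomes $x_i^2\le (K-1)\sigma_\Delta^2=\frac{K-1}{K}\sum_k x_k^2$.

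The key step is to bound $x_i$ by the remaining deviations. Because the deviations sum to zero, $x_i=-\sum_{k\ne i}x_k$. Cauchy--Schwarz over the $K-1$ other indices then gives
\begin{equation}
x_i^2\le (K-1)\sum_{k\ne i}x_k^2=(K-1)\Bigl(\sum_k x_k^2-x_i^2\Bigr).
\end{equation}
Rearranging yields $Kx_i^2\le (K-1)\sum_k x_k^2=(K-1)K\sigma_\Delta^2$. Dividing by $K\sigma_\Delta^2$ gives $|\bar A_i^{\mathrm{mcr}}|\le\sqrt{K-1}$. Equality holds exactly when all other deviations are equal, so the constant is sharp. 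The only place that needs care is using the population (divide-by-$K$) variance of Eq.~(8); a sample variance would change the constant.

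For $\epsilon>0$, the denominator only grows: $\sigma_\Delta+\epsilon>\sigma_\Delta$. Hence $|x_i|/(\sigma_\Delta+\epsilon)\le|x_i|/\sigma_\Delta\le\sqrt{K-1}$.

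For the sign claim, take $t\in\tau_i$, so that $A^{\mathrm{mcr}}_{t,i}=\bar A_i^{\mathrm{mcr}}$. By the triangle inequality,
\begin{equation}
|A_{m,t}^{\mathrm{MCR\text{-}GRPO}}-A_m^{\mathrm{seq}}|\le\lambda_{\mathrm{mcr}}\sqrt{K-1}<|A_m^{\mathrm{seq}}|.
\end{equation}
Therefore $A_{m,t}^{\mathrm{MCR\text{-}GRPO}}$ lies in the open interval centred at $A_m^{\mathrm{seq}}$ with radius $|A_m^{\mathrm{seq}}|$. That interval excludes zero, so the composite advantage has the same sign as $A_m^{\mathrm{seq}}$. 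This also shows the hypothesis forces $A_m^{\mathrm{seq}}\ne0$.
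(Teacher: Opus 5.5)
Your proof is correct and follows essentially the same route as the paper: zero-sum deviations, Cauchy--Schwarz over the $K-1$ remaining indices with the population-variance normalization, shrinkage by the factor $\sigma_\Delta/(\sigma_\Delta+\epsilon)\le 1$ when $\epsilon>0$, and the triangle inequality for the sign claim. Your additions, the sharpness remark and the note on population versus sample variance, are sound. The final step implicitly uses $\lambda_{\mathrm{mcr}}\ge 0$, which holds in the paper's setting.
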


\begin{proof}
For object tokens $t\in\tau_i$, Eq.~(10) gives $A_{t,i}^{\mathrm{mcr}}=\bar A_i^{\mathrm{mcr}}$. Let $z_i=\bar A_i^{\mathrm{mcr}}$ with $\epsilon=0$. By the population-standard-deviation normalization in Eq.~(8), we have $\sum_i z_i=0$ and $\sum_i z_i^2=K$. Hence:
\begin{equation}
z_i^2 = \left(\sum_{k\ne i}z_k\right)^2 \le (K-1)\sum_{k\ne i}z_k^2 = (K-1)(K-z_i^2),
\end{equation}
which gives $|z_i|\le\sqrt{K-1}$. If $\epsilon>0$, each normalized value is multiplied by $\sigma_\Delta/(\sigma_\Delta+\epsilon)\le 1$. The sign statement follows from
the triangle inequality.
\end{proof}

These results explain how MCR turns leave-one-out value changes into box-level training signals. The raw value difference $\Delta V_i$ measures how much a predicted record contributes to the count-normalized matched set value. Under over-prediction, removing an unmatched or redundant record can improve the set value, yielding negative raw credit. When $K\le N$, raw contributions are non-negative, but within-response centering can assign negative normalized residual credit to records whose marginal contributions fall below the response mean. Thus, weak, redundant, or poorly localized records can still be suppressed in the policy update, while records with stronger marginal contributions receive larger MCR credits and are encouraged. In this way, MCR realizes the positive-and-negative box-level credit assignment described in the main text while preserving the original response-level GRPO comparison.

\section{Additional Qualitative Examples}

Figure~\ref{fig_think} visualizes representative MCR-GRPO inference outputs across the task families evaluated in the main paper. The examples cover DOD on $\mathrm{D^3}$~\citep{d3}, segmentation and REC on ReasonSeg~\citep{reasonseg} and RefCOCO(+/g)~\citep{refcoco}, and counting on PixMo-Count~\citep{pixmocount} and CountBench~\citep{countbench}.

For each input, the figure displays an excerpt from the generated \texttt{<think>} content together with the final prediction. The reasoning identifies visual evidence used to distinguish the target from distractors or enumerate the queried category, while the final output expresses the decision as localized boxes, segmentation masks, or a count. These examples show how the same structured inference interface supports different task-specific outputs.

\begin{figure*}[!t]
\centering
\includegraphics[width=0.9\textwidth]{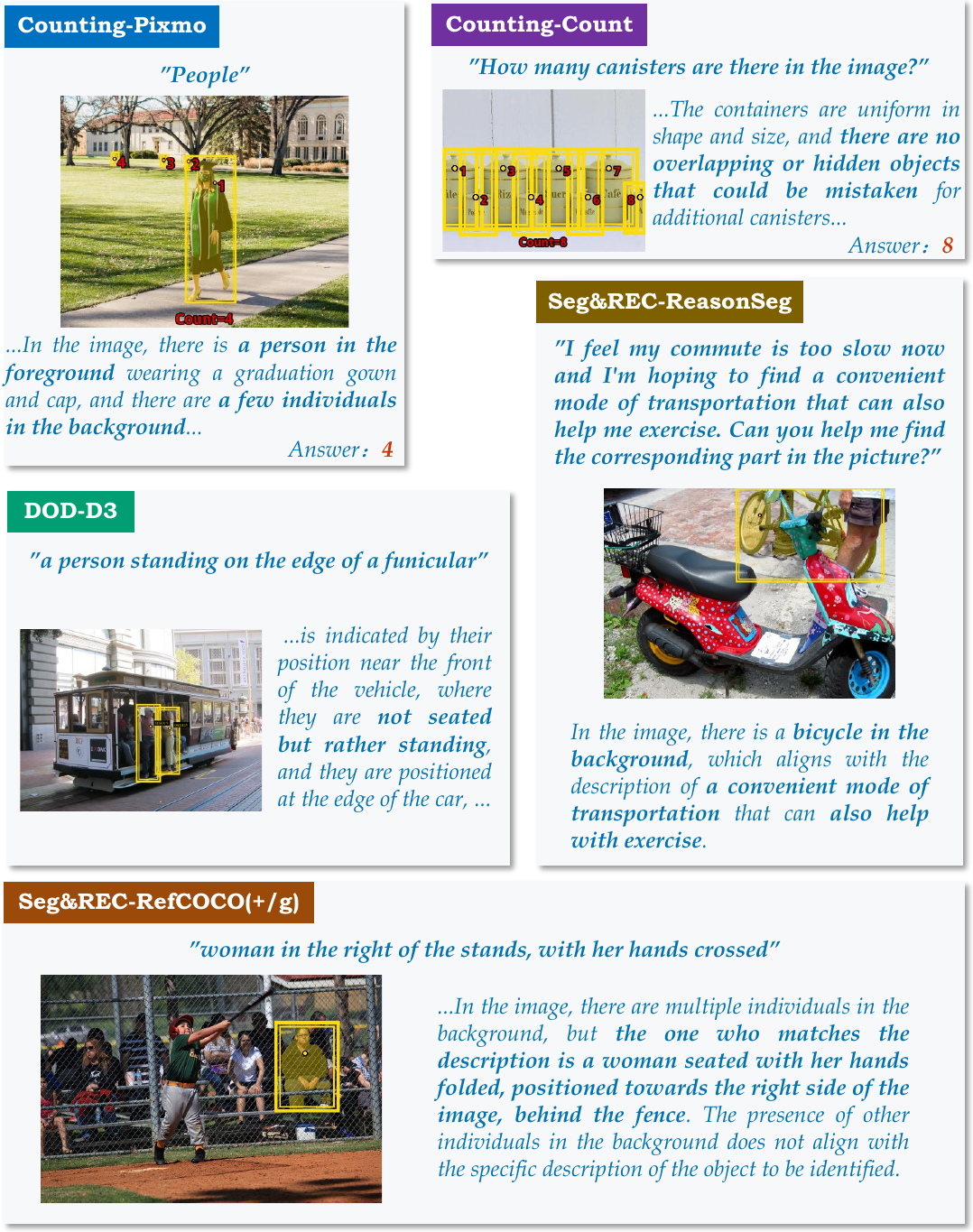}
\caption{\textbf{Cross-Task Reasoning and Prediction Examples from MCR-GRPO.} Representative examples cover DOD on $\mathrm{D^3}$, segmentation and REC on ReasonSeg and RefCOCO(+/g), and counting on PixMo-Count and CountBench. Each example pairs an excerpt from the generated \texttt{<think>} content with the final prediction, showing how attribute, relational, and cardinality reasoning is connected to task-appropriate boxes, masks, and counts.}\label{fig_think}
\end{figure*}

\renewcommand{\refname}{Supplementary References}
\putbib[ref]
\end{bibunit}

\end{document}